\definecolor{Gray}{gray}{0.9}
\newcommand{\neural}{\textit{Neural-L1}}
\newcommand{\ladap}{\textit{L1adap}}
\newcommand{\deepM}{\textit{Deep-MRAC}}
\newcommand{\LF}{\textit{LF}}
\title{\LARGE \bf Neural $\mathcal{L}_1$ Adaptive Control of Vehicle Lateral Dynamics}
\author{Pratik Mukherjee$^{1}$\orcidlink{0000-0003-2970-8515}, Burak M. Gonultas$^{1}$\orcidlink{0000-0002-7966-7929}, O. Goktug Poyrazoglu$^{1}$\orcidlink{0009-0002-3778-100X}  and Volkan Isler$^{1}$\orcidlink{0000-0002-0868-5441}
\thanks{$^{1}$ The authors are with the Department of Computer Science and Engineering, University of Minnesota, Minneapolis, MN, 55455, USA
        {\tt\small \{mukhe027, gonul004, poyra002, isler\}@umn.edu}}%
}
\newtheorem{theorem}{Theorem}
\newtheorem{lemma}{Lemma}
\newtheorem{assumption}{Assumption}
\newtheorem{problem}{Problem}
\newtheorem{fact}{Fact}
\begin{document}


\maketitle
\thispagestyle{empty}
\pagestyle{empty}

\begin{abstract}
We address the problem of stable and robust control of vehicles with lateral error dynamics for the application of \emph{lane keeping}. Lane departure is the primary reason for half of the fatalities in road accidents, making the development of \emph{stable}, \emph{adaptive} and \emph{robust} controllers a necessity. 
 Traditional \emph{linear feedback} controllers achieve satisfactory tracking performance, however, they exhibit unstable behavior when \emph{uncertainties} are induced into the system. Any disturbance or uncertainty introduced to the steering-angle input can be catastrophic for the vehicle. Therefore, controllers must be developed to actively handle such uncertainties. 
In this work, we introduce a Neural $\mathcal{L}_1$ Adaptive controller (\neural) which learns the uncertainties in the lateral error dynamics of a front-steered Ackermann vehicle and guarantees stability and robustness. Our contributions are threefold: i)~We extend the theoretical results for \emph{guaranteed} stability and robustness of conventional $\mathcal{L}_1$ Adaptive controllers to \neural; ii)~We implement a \neural~for the lane keeping application which learns uncertainties in the dynamics accurately; iii)~We evaluate the performance of \neural~on a physics-based simulator, PyBullet, and conduct extensive real-world experiments with the F1TENTH platform 
to demonstrate superior reference trajectory \emph{tracking performance} of \neural~compared to other state-of-the-art controllers, in the presence of uncertainties. Our project page, including supplementary material and videos, can be found at \url{ https://mukhe027.github.io/Neural-Adaptive-Control/}
\end{abstract}

 \begin{IEEEkeywords}
$\mathcal{L}_1$ Adaptive control; Autonomous vehicles; Lateral error dynamics; Robust control.
\end{IEEEkeywords}
\section{Introduction}

\IEEEPARstart{S}{afety critical} systems are prevalent in the transportation industry, especially in the aerospace sector. Therefore, the focus has always been on automating flight control systems because of the underlying complexities of flying an aircraft \cite{hovakimyan20111,annaswamy2013recent,wise2006adaptive}. On the contrary, in the automotive sector, road vehicles have traditionally been driven manually, primarily dependent on a human making decisions behind the wheel. For automotive systems, the lower level vehicle control system relies on conventional controllers such as the \emph{linear state-feedback} controller (\LF)~\cite{rajamani2011vehicle}. However, with recent surge in interest for developing \emph{autonomous} road vehicles, there is a need for more reliable control systems that are robust to induced uncertainties.


Federal Highway Administration has stated that from 2016 to 2018 an average of 19,158 fatalities resulted from roadway departures, which is 51 percent of all traffic fatalities in the United States \cite{FHWA}. 
 Lane departures are identified to be the number one cause of fatal accidents in the United States by the National Highway Transportation Safety Administration. 
Thus, there has been significant investment in research in industry as well as in academia for developing systems that recognize and identify accidental lane departures of vehicles \cite{rajamani2011vehicle,bevly2016lane}. In this direction, some of the research thrusts are: \emph{lane departure warning systems} (LDWS), \emph{lane keeping systems} (LKS) and yaw stability control systems. We particularly focus on LKS in this paper and propose to develop not just a \emph{stable} controller but also an \emph{adaptive} controller which is \emph{robust} to uncertainties induced into a system in the form of sensor noise, control signal disturbances and external obstacles on road. 
\begin{figure}[!htb]
\centering
\subfloat{\includegraphics[width=5cm,height=8cm,, angle =90,keepaspectratio]{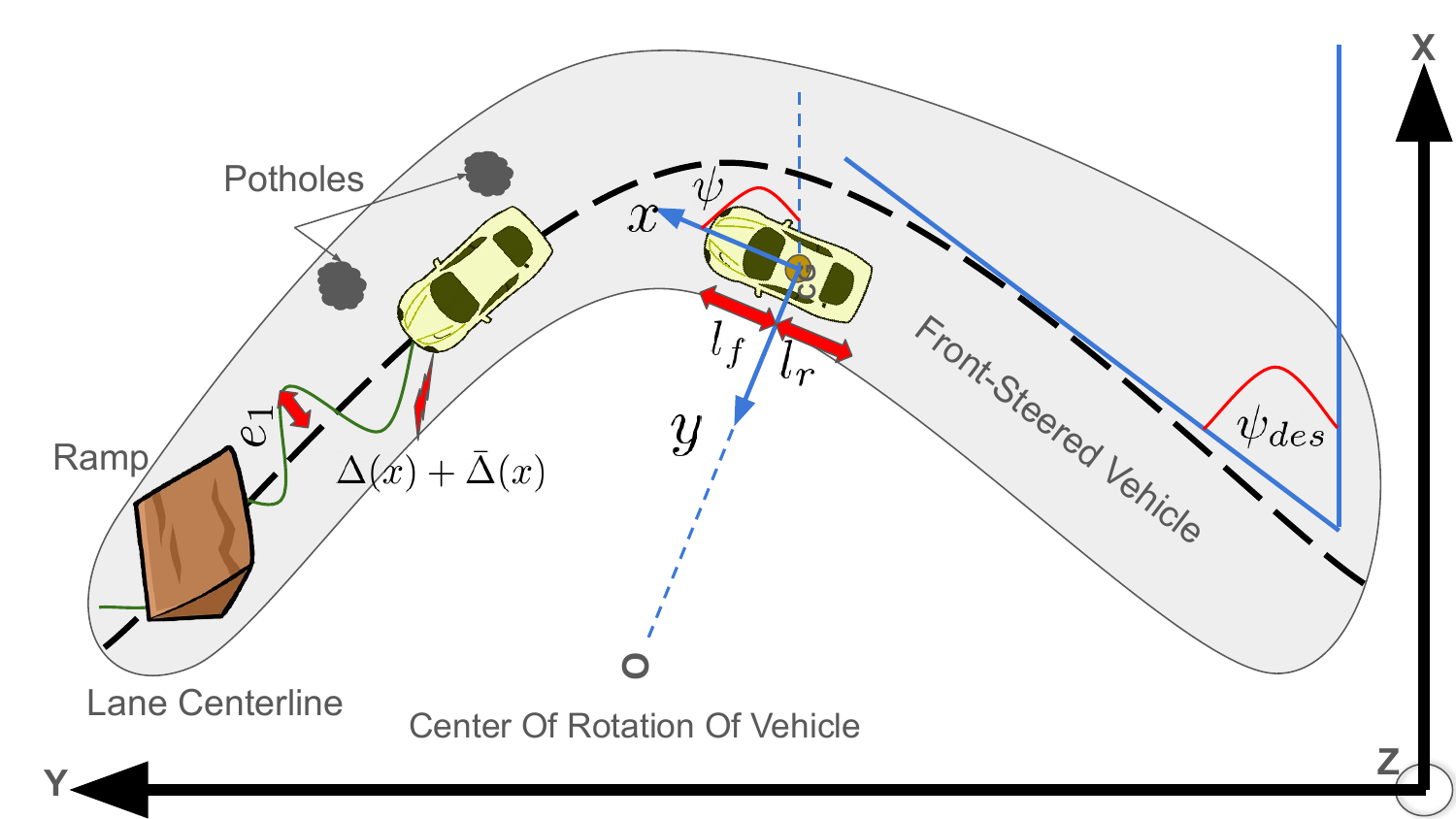}}
\caption{A lateral lane keeping system (LKS) for a front-steered Ackermann vehicle, depicting the effect of uncertainties such as signal disturbance in the form $\Delta(x)+\Bar{\Delta}(x)$, potholes and ramp, on the lane keeping performance. The image depicts a simplified linear two-degrees of freedom (2-DOF) bicycle model of the vehicle lateral dynamics
derived in \cite{rajamani2011vehicle}.}
\vspace{-10pt}
\label{fig:lateral_vehicle_dynamics_framework}
\end{figure}

With the need for certifying advanced adaptive flight critical systems emerging rapidly, the seminal work by Hovakimyan et al. \cite{hovakimyan20111} introduces $\mathcal{L}_1$ adaptive controller (\ladap). \ladap~\cite{hindman2007designing,cao2008adaptive,cao2008design,wang2008l1,wang2009l1, leman2009l1,cao20091} is a class of control system which is inherently derived from the traditional class of \emph{adaptive} controllers, \emph{Model Reference Adaptive} Controllers (MRAC) \cite{ioannou1996robust,narendra2012stable,annaswamy1995discrete} and the \emph{ State Predictor-based MRAC} \cite{slotine1991applied}. Beyond the traditional properties of guaranteed stability and \emph{adaptiveness} of MRAC type controllers, \ladap~provides \emph{robustness} guarantees to uncertainties induced into the system dynamics. In this paper, we show that the performance of traditional \ladap \cite{hovakimyan20111} can be further enhanced with the help of neural network. \deepM~in \cite{joshi2019deep} has demonstrated that the controller performance of traditional MRAC is further improved by using neural network to derive the \emph{adaptive laws} that learn the uncertainties better, instead of using conventional \emph{adaptive laws} of MRAC to estimate the \emph{true} uncertainties.
Therefore, we present the implementation of a modified \ladap, \emph{Neural} $\mathcal{L}_1$ Adaptive controller (\neural), where the adaption law is learned by learning the weights of neural network using the framework in \cite{joshi2019deep}, and then appended to the adaptive laws derived from the conventional theory of \ladap~in \cite{hovakimyan20111}. Further, we extend the proof for stability and robustness of the \ladap~\cite{hovakimyan20111} with this new neural network framework to \neural. Finally, we evaluate the performance of \neural~for a lane-keeping application using a physics-based simulation of the F1TENTH front-steered Ackermann vehicle as well as on a real F1TENTH vehicle. Our evaluations show that the proposed \neural~is robust and stable in the presence of uncertainties, whereas other existing controllers such as the \LF~\cite{rajamani2011vehicle} and the \deepM~controller from \cite{joshi2019deep} exhibit unstable behavior and have a deteriorating tracking performance. 
When the tracking performance of our proposed controller is compared with that of the conventional \ladap, the use of neural network to learn the \emph{residual} uncertainty enhances the tracking performance of our proposed controller. 
To summarise, our contributions are threefold: i) We extend the theoretical results for \emph{guaranteed} stability and robustness of conventional \ladap~to \neural; ii) We implement a \neural~for lane keeping application of front-steered Ackermann vehicles which uses neural network to learn induced uncertainties in the system dynamics; iii) We evaluate the performance of \neural~on a physics-based simulator, PyBullet, and conduct extensive real-world experiments with the F1TENTH platform 
to demonstrate superior reference trajectory \emph{tracking performance} of \neural~compared to other state-of-the-art controllers. For a rigorous evaluation of all controllers, we externally induce sensor noise, control signal disturbance and place physical obstacles in environment. \neural~exhibits stability and robustness to induced noise and disturbance to achieve successful tracking of \emph{arbitrary} reference trajectories.

The rest of the paper is organized as follows. Related work is summarized in
Section \ref{Related Work}. The formal problem formulation is given in Section \ref{Problem Formulation}. In Section \ref{System Overview}, we provide extensive details of the \emph{stability} theory behind \neural. Finally, we provide simulation and experimental results with F1TENTH in Sections \ref{SIM-Results} and \ref{EXP-Results} and conclude the paper in Section \ref{end}.

\section{Related Work}\label{Related Work}

The works \cite{peng2000vehicle,gohl2002development,kidane2005control,lewis2003sliding,rajamani2002semi} are some of the initial efforts that discuss control methods for maintaining stability in road vehicles for prevention of rollover and lateral control of front-steered vehicles. With respect to applications such as LKS and LDWS, lateral control of vehicle is of importance.  \cite{rajamani2003lateral,taylor1999comparative,wang2001trajectory,thorpe1987vision,mouri1997automatic,akar2008lateral,naranjo2008lane,cai2003robust } are works which discuss the application of stable lateral control in road vehicles and \cite{ keviczky2006predictive, falcone2007predictive,lee2009improved} are more recent works that address the problem of stable steering control in vehicles. In \cite{taylor1999comparative}, the authors provide one of the first studies of application of \emph{vision}-based systems in lateral control of vehicles. In work \cite{wang2001trajectory}, authors develop a two-part trajectory planning algorithm which consists of the steering planning and velocity planning for a four-wheeled steering vehicle. 
However, none of these works address the issue of controller \emph{stability}, \emph{adaptiveness} and \emph{robustness} in the presence of external factors such as sensor noise, control signal disturbance or physical obstacles on road.

 With the interest growing in developing LKS and LDWS technologies, there is a need for controllers that can \emph{adapt} to uncertainties induced externally into the system. 
 Most Proportional-Integral-Derivative (PID) controllers should be sufficient to implement LKS and LDWS systems as demonstrated in these works \cite{ marino2009nested, zang2007fuzzy,haytham2014modeling}. In this regard, \emph{linear state-feedback} controllers such as PID controllers are usually the primary choice due to their fairly robust nature and the ease of implementation on real systems, but since control systems are prone to uncertainties being induced from the environment, PID controllers are not capable of adapting in real-time to these uncertainties efficiently. 
  Therefore, this naturally drives the interest for developing \emph{adaptive} controllers. 
Adaptive controllers have been around for a while \cite{ hammer1962adaptive, zinober1980adaptive,morse1979global,rohrs1982adaptive}, but they are primarily implemented for highly safety-critical aerospace systems \cite{morse1989flight,joshi2019deep}. In \cite{morse1989flight}, the authors, for the first time, implement an MRAC for the flight control system of F-16 aircraft. The work in \cite{joshi2019deep} is relatively new which shows stable control of aerial vehicles, where the \emph{adaptive laws} of the MRAC are learned using a neural network. The MRAC framework is adaptive and stable, but it is not robust with respect to uncertainties in the dynamics of the system. In this paper, we adopt the neural network architecture from \cite{joshi2019deep} and apply it to a traditional \ladap~to account for \emph{robustness}. The key feature of \ladap~\cite{hovakimyan20111} architecture is that it guarantees robustness in the presence of fast adaptation, which leads to uniform performance bounds both in transient and steady-state operation. 
In the recent works \cite{wu20221, huang2023datt}, the authors demonstrate the augmentation of \ladap~to other conventional controllers. The work \cite{wu20221} augments  \ladap~to a \emph{geometric} controller for control of quadrotors. The authors demonstrate that the robustness characteristic of \ladap~handles non-linear uncertainties induced into the system, whereas \cite{huang2023datt} augments the same with a feedforward-feedback control system and learns a stable controller for adaptive trajectory tracking of \emph{unfeasible} trajectories by quadrotors. None of these works address the LKS application.
In \cite{shirazi2017}, the authors provide \emph{only} simulation results corroborating the characteristics of stability, robustness and adaptability claimed by the seminal work \cite{hovakimyan20111} for the application of lane keeping using the lateral error dynamics in \cite{rajamani2011vehicle}. 


\section{Problem Formulation}\label{Problem Formulation}
In this section, we review the theory behind the front-steered Ackermann vehicle lateral \textit{error} dynamics from \cite{rajamani2011vehicle} and the \emph{neural network} approach we adopt from \cite{joshi2019deep}.



 \subsection{Vehicle Lateral Error Dynamics}
 Fig.~\ref{fig:lateral_vehicle_dynamics_framework} is a depiction of a simplified linear two-degrees of freedom (2-DOF) bicycle  model of the vehicle lateral dynamics derived in \cite{rajamani2011vehicle}.
 2-DOF is represented by $\psi$, the vehicle yaw angle, and $y$, the lateral position with respect to the center of the rotation of the vehicle $O$.  The yaw angle is considered as the angle between horizontal \textit{vehicle body frame} axis of the vehicle, $x-axis$, and the global horizontal axis, $(X)$. The \textit{constant} longitudinal velocity of the vehicle at its center of gravity ($\boldsymbol{CG}$) is denoted by $V_x$ and the mass of the vehicle is denoted by $m$. The distances of the front and rear axles from the $\boldsymbol{CG}$ are shown by $l_f$, $l_r$, respectively, and the front and rear tire cornering stiffness are denoted by $C_{af}$ and $C_{ar}$, respectively. The steering angle is denoted by $\delta$ which also serves as the control signal when a controller is implemented and the yaw moment of inertia of the vehicle is denoted by $I_z$. Considering the lateral position, yaw angle, and their derivatives as the state variables, and using the Newton's second law for the motion along the \textit{vehicle body frame} $y-axis$, the state space model of lateral vehicle dynamics is derived in \cite{rajamani2011vehicle}. 
 
The error dynamics is written with two error variables : $e_1$, which is the distance between the $\boldsymbol{CG}$ of the vehicle from the center line of the lane; $e_2$, which is the orientation error of the vehicle with respect to the desired yaw angle $\psi_{des}$.
Assuming the radius of the road is $R$, the rate of change of the desired orientation of the vehicle can be defined as $\dot{\psi}_{des}= \frac{V_x}{R}$.
The tracking or lane keeping objective of the lateral control problem is expressed as a problem  of stabilizing the following \textit{error} dynamics at the origin.
 \begin{align} \label{eq:er_dyn_st_sp}
&  \underbrace{\frac{d}{dt}\begin{bmatrix}
e_1  \\
\dot{e_1} \\
e_2\\
\dot{e_2} 
\end{bmatrix}}_{\dot{x}} =
\resizebox{0.75\hsize}{!}{  $\underbrace{\begin{bmatrix}
0 & 1 & 0 & 0\\
0 & -\frac{2 C_{af}+2 C_{ar}}{m V_x} & \frac{2 C_{af}+2 C_{ar}}{m } &  - \frac{2 C_{af}l_f-2 C_{ar}}{m V_x} l_r\\
0 & 0 & 0 & 1\\
0 & -\frac{2 C_{af}l_f-2 C_{ar}l_r}{I_z V_x} & \frac{2 C_{af}l_f-2 C_{ar}l_r}{I_z } & -\frac{2 C_{af}l_f^2+2 C_{ar}l_r^2}{I_z V_x} 
\end{bmatrix}}_{A}$} && \nonumber\\
&  \times 
\underbrace{\begin{bmatrix}
e_1  \\
\dot{e_1} \\
e_2\\
\dot{e_2} 
\end{bmatrix}}_{x} + \underbrace{\begin{bmatrix}
0  \\
\frac{2 C_{af}}{m} \\
0\\
\frac{2C_{af}l_f}{I_z}
\end{bmatrix}}_{B_1}\underbrace{\delta}_{u} + \underbrace{\begin{bmatrix}
0  \\
-V_x - \frac{2 C_{af}l_f-2 C_{ar}}{m V_x} \\
0\\
-\frac{2 C_{af}l_f^2+2 C_{ar}l_r^2}{I_z V_x}
\end{bmatrix}}_{B_2} \dot{\psi}_{des}
\end{align}
 
Therefore, the above state-space form in \eqref{eq:er_dyn_st_sp} can be represented in the \textit{general} state space form as in \eqref{eq:er_gnrl_st_sp}
\begin{align} \label{eq:er_gnrl_st_sp}
\dot{x}(t) = Ax(t) + B_1(u(t)) + B_2 \dot{\Psi}_{des}
\end{align}
where $x(t)\in \mathbb{R}^n$, $t \geq 0$ is a state vector, $u(t) \in \mathbb{R}^m$, $t \geq 0$ is the control input which in this case is the steering angle $\delta$. The elements of $A \in  \mathbb{R}^{n\times n}$, $B_1, B_2 \in  \mathbb{R}^{n\times m}$ matrices are generally considered known, but in our case we obtain the elements of matrices from our system identification work \cite{gonultas2023system} and we assume pair $(A,B_1)$ is controllable. 

\subsection{Deep Neural Networks (DNN) Architecture}
In this paper, we incorporate the DNN architecture from \cite{joshi2019deep} with the purpose of extracting relevant features of the system uncertainties, which otherwise is not trivial to obtain without the limits on the domain of operation. The idea in machine learning is that a given function can be encoded with weighted combinations of \emph{feature} vector $\Phi \in \mathcal{F}$, s.t $ \Phi(x) = [\phi_1(x), \phi_2(x),\dots, \phi_k(x) ]^T \in \mathbb{R}^k$, and $W^* \in \mathbb{R}^{k \times m}$ a vector of ideal weights s.t $\|y(x) - W^{*^T}\Phi(x)\|_{\infty} < \epsilon(x)$. DNNs utilize composite functions of features arranged in a directed acyclic graph, i.e. $\Phi(x)= \phi_n(\theta_{n-1},\phi_{n-1}(\theta_{n-2}, \phi_{n-2}(\dots)))$ where $\theta_i$'s are the layer weights. In this regard, we implement a dual time-scale learning scheme, same as in \cite{joshi2019deep}, to learn the uncertainties in real time. In the dual time-scale scheme, the weights of the outermost layers are adapted in real time using a generative network, whereas the weights of the inner layers are adapted using batch updates. Unlike the conventional neural network training where the true target values $y \in \mathcal{Y}$ are available for every input $x \in \mathcal{X}$ , in this paper, true system uncertainties as the labeled targets are not available for the network training. Therefore, we use part of the network itself (the last layer) with pointwise weight updated according to the adaptive laws of \ladap~as the generative model for the data. 
 The generative model uncertainty estimates $y = W^T \Phi(x, \theta_1, \theta_2, \dots \theta_{n-1}) = \Delta^{'}(x)$ along with inputs $x_i$ make the training data set $Z^{pmax} = \{x_1, \Delta^{'}(x_i)\}^{pmax}_{i=1}$. The adopted DNN architecture is shown in \emph{green} in Fig.~\ref{fig:adaptive_framework}. As shown in Fig.~\ref{fig:adaptive_framework}, the adopted DNN architecture is appended to the \emph{adaptive laws} learned by the traditional \ladap~in \emph{red}. The reader is referred to \cite{joshi2019deep} for extensive detail on the network architecture and the results evaluating the DNN training, testing and validation performances.

\subsection{Problem Statement}\label{subsec:problem-statement}

In this subsection, we state our problem. We remind the reader that in general conventional stable controllers exist for the LKS application, but with increase in interest for autonomy, development of not only stable, but robust and adaptive controllers is a necessity.  Thus, in this work, we place emphasis on taking an existing stable, robust and adaptive controller, the \ladap, extend this controller's theory while incorporating learning mechanisms of neural network to improve the controller's performance to guarantee to maintain the controller's stability, adaptiveness and robustness properties and apply the modified \ladap~controller to LKS applications.

\begin{problem}\label{problem:1}
Given the lateral error dynamics in~\eqref{eq:er_dyn_st_sp} for front-steered Ackermann vehicles, determine a stable, adaptive and robust control law~$u(t)$ in the form of steering angle control signal $\delta$ such that the front-steered vehicle maintains the lane on a road with constant radius $R$ while adapting to unknown, unmodeled external disturbances such as sensor noise, control signal disturbance and physical obstacles on road, in real time. The adaptive properties of the controller can be further enhanced by incorporating learning mechanisms of neural network.
\end{problem}

The above problem is especially difficult when the nature of the disturbance induced into the system is unknown and hard to model. Existing adaptive controllers derive an adaptive law to learn the disturbance, where using neural network further aids with the learning capability of the controller.

\section{System Overview}\label{System Overview}
In this section, we present the \emph{stability} theory for the \neural~controller. The stability analysis of \neural~is an extension of the stability analysis of \ladap~\cite{hovakimyan20111}. After augmenting the traditional \ladap~controller with the DNN framework from \cite{joshi2019deep} to derive the \neural~controller, the extension of the stability analysis of \ladap~becomes non-trivial. Therefore, in this section, we outline the steps as well as provide theoretical results for a complete stability analysis of our proposed \neural~controller. Fig.~\ref{fig:adaptive_framework} shows the complete \neural~architecture. 
\begin{figure*}[t!]
\centering
    \includegraphics[width=0.9\textwidth]{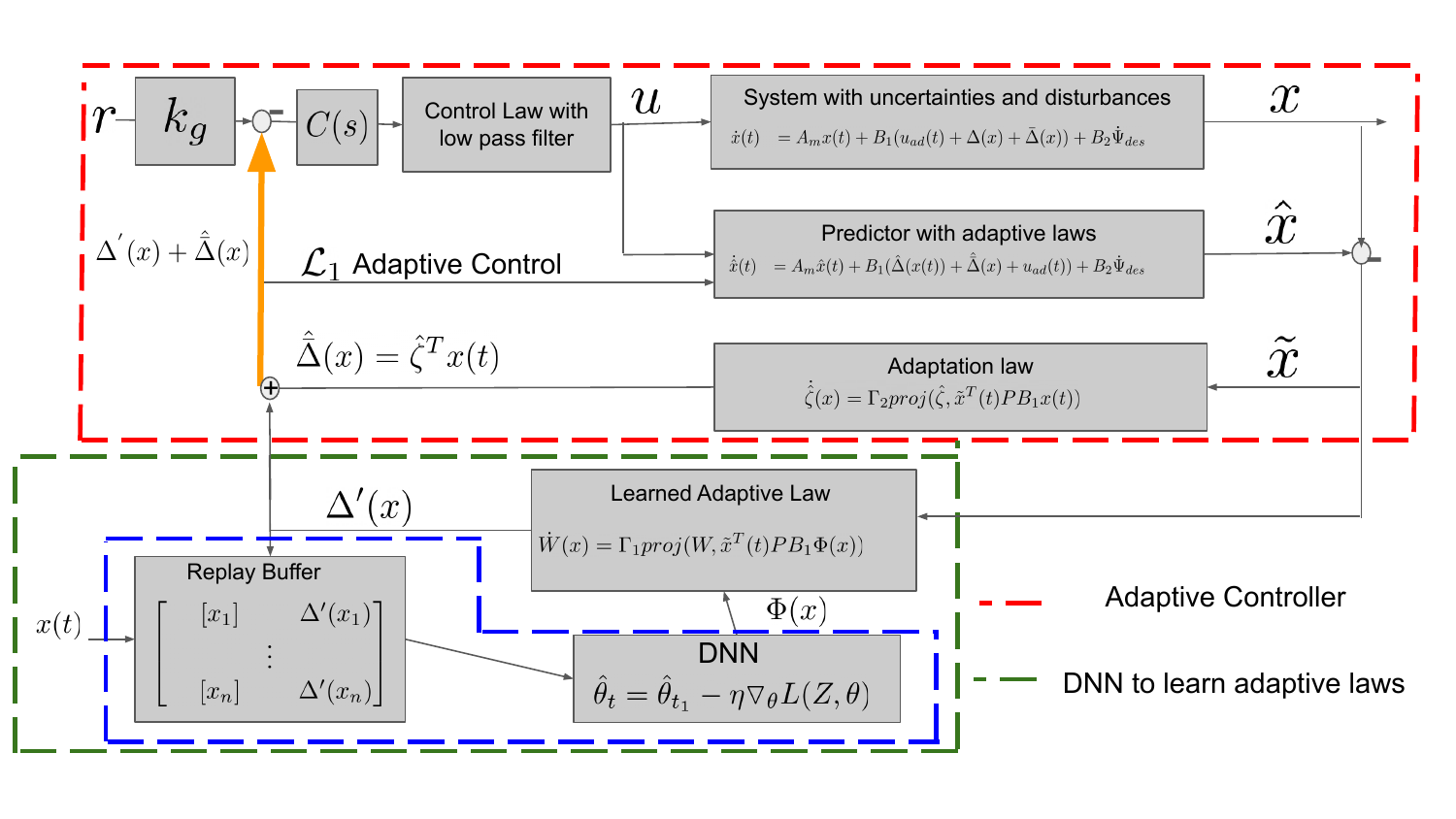}
    \caption{Overview of our approach depicts the \ladap~\cite{hovakimyan20111}  (the red box), the neural network-based adaptive law (the green box) and the neural network training mechanism (the blue box), adopted from \cite{joshi2019deep}. Unlike \cite{joshi2019deep}, in our architecture, we append the neural network learned adaptive law $\Delta^{'}(x)$ to the \ladap~derived adaptive law $\hat{\Bar{\Delta}}(x)$, to obtain $\Delta^{'}(x_1)+\hat{\Bar{\Delta}}(x_1)$ (indicted with yellow arrow). Here neural network learned adaptive law $\Delta^{'}(x)$ is considered to be the residual uncertainty that is being learned.}
    \label{fig:adaptive_framework}
\end{figure*}
We would like to emphasize that our controller architecture of \neural~is different from the controller architecture of \deepM~in \cite{joshi2019deep}. This is because we append the adaptive laws learned by the DNN framework to the \emph{adaptive laws} derived by the \ladap~\cite{hovakimyan20111} controller, unlike in \deepM~\cite{joshi2019deep} where the adaptive laws learned by the DNN framework \emph{replace} the adaptive laws derived from a traditional MRAC \cite{annaswamy1995discrete} controller. The addition of the DNN based adaptive laws to the adaptive laws derived by \ladap~is indicated with an yellow arrow in Fig~\ref{fig:adaptive_framework}. 

The stability analysis of \neural~consists of the following steps. First, we will define an ideal closed-loop reference system $\dot{x}_{ref}(t)$  with control law $u_{ref}(t)$. We will then use the $\mathcal{L}_1$ norm condition to show that $x_{ref}(t)$ is uniformly bounded in Lemma~\ref{lem:l1_cond}. Next, we will analyse the transient and steady state performance of our controller by first defining the error dynamics as $\dot{\Tilde{x}}(t)$, then presenting results of uniform boundedness of $\Tilde{x}(t)$ and convergence $\underset{t\rightarrow \infty}{\lim} \Tilde{x} (t)=LB$ to \emph{lower bound} in Lemma~\ref{lem:pred_er} and Lemma~\ref{lem:conv}, respectively. The error dynamics $\dot{\Tilde{x}}$ is derived as the plant and state predictor error dynamics. We prove $\underset{t\rightarrow \infty}{\lim} \Tilde{x}(t) =LB$ in Lemma~\ref{lem:conv} by proving that the state of the predictor remains bounded. Finally, using the results from Lemma~\ref{lem:pred_er} and Lemma~\ref{lem:conv},  we show the \emph{main} results in Theorem~\ref{thrm:1} proving uniform boundedness of $\|x_{ref}(t)-x(t)\|_{\mathcal{L}_{\infty}}$ and $\|u_{ref}(t)-u(t)\|_{\mathcal{L}_{\infty}}$. The results in Theorem~\ref{thrm:1} imply that by selecting the right parameters for the static feedback gain $k_m$ , the Bounded Input, Bounded Output (BIBO)-stable and strictly proper transfer function of the filter $C(s)$ and the adaptation gain $\Gamma$, we can ensure a desirable response out of the \emph{reference} system and expect the plant states $x(t)$ to approach the \emph{reference} system closely. We conclude this section discussing the proof for Lyapunov stability of the complete controller after the augmentation of the DNN framework.    

\subsection{$\mathcal{L}_1$ Adaptive Control (\ladap) Architecture}
Consider the \emph{total} adaptive control law $u(t)$, consisting of linear feedback component, $u_m(t)$, and the adaptive control component $u_{ad}(t)$
\begin{align}\label{eq:totl_contrl_law}
    u(t) = u_m (t) + u_{ad}(t), ~ u_m(t) = - k_m x(t)
\end{align}
where $k_m \in \mathbb{R}^n$ renders $A_m  \triangleq A-B_1k_m$ Hurwitz, while $u_{ad}(t)$ is the adaptive component, to be defined shortly. The static feedback gain $k_m$, with the control law in \eqref{eq:totl_contrl_law} substituted in \eqref{eq:er_gnrl_st_sp}, leads to the following closed-loop system:
\begin{align} \label{eq:er_unct}
\dot{x}(t) &= A_mx(t) + B_1(u_{ad}(t)+ \Delta(x) + \Bar{\Delta}(x)) + B_2\dot{\Psi}_{des},\nonumber\\ 
 x(0) &= x_o; \quad y(t) =  c^T x(t) 
\end{align}
where $\Bar{\Delta}(x)) = \zeta^T x(t)$, which follows the relation from traditional \textit{adaptive controller} derivation. We refer the reader to \cite{slotine1991applied} for background reading on the theory of State Predictor-based MRAC as it forms the basis of derivation for the theory of \ladap.
Consequently, for a linearly parameterized system in \eqref{eq:er_unct}, we consider the state predictor as
\begin{align} \label{eq:st_prdctr}
\dot{\hat{x}}(t) &= A_m\hat{x}(t) + B_1(\hat{\Delta}(x(t))+\hat{\Bar{\Delta}}(x)+ u_{ad}(t)) + \scriptstyle B_2\dot{\Psi}_{des},\nonumber\\ ~\hat{x}(0) &= x_o; \quad y(t) =  c^T x(t) 
\end{align}
where $\hat{x}(t)\in \mathbb{R}^n$ is the state of the predictor and $\hat{\Delta}(x), \hat{\Bar{\Delta}}(x)= \hat{\zeta}x(t) \in \mathbb{R}^n$ are the estimates of the \textit{unknown} parameter $\Delta(x(t))\in \mathbb{R}^n$, and $\Bar{\Delta}(x)\in \mathbb{R}^n$.

The \emph{total} true uncertainty $(\Delta(x) + \Bar{\Delta}(x))$ is unknown\footnote{Note that dependence on time will only be shown when introducing new concepts or symbols. Subsequent usage will drop these dependencies for clarity of presentation.}, but it is assumed to be continuous over the compact set $D_x  \subset \mathbb{R}^n$. In this paper, we use the \emph{adaptive laws} from the traditional \ladap~to learn part of the total true uncertainty, $\Bar{\Delta}(x)$, and we use the augmented DNN framework to learn the \emph{residual} uncertainty, $\Delta(x)$. We use a DNN framework to represent a function when the basis vector is unknown. Using DNNs, a non-linearly parameterized network estimate of the uncertainty can be written as $\hat{\Delta}(x) \triangleq \theta_n^T \Phi(x)$, where $\theta_n \in \mathbb{R}^{k\times m}$ are network weights for the final layer and $\Phi(x)= \phi_n (\theta_{n-1}, \phi_{n-1}(\theta_{n-2},\phi_{n-2}(\dots))))$, is a $k$ dimensional feature vector which is a function of the inner layer weights, activations and inputs. The basis vector $\Phi(x)\in F:\mathbb{R}^n \rightarrow \mathbb{R}^k$ is considered to be Lipschitz continuous to ensure the existence and uniqueness of the solution. 

\subsection{Adaptive Control Law}
We now derive $u_{ad}$ in \eqref{eq:totl_contrl_law}. We know that the form of $u_m$ satisfies the matching conditions such that $A_m\triangleq A - B_1k_m$, which is a typical form of the \textit{full state feedback controller}. Further, we would like to choose $u_{ad}$ such that it cancels out $\Delta(x) + \Bar{\Delta}(x)$, however, since $\Delta(x)$ and $\Bar{\Delta}(x)$ are unknown, we \emph{estimate} $u_{ad}=\hat{\Delta}(x)+ \hat{\Bar{\Delta}}(x)$. 

From \cite{hovakimyan20111}, we know that \ladap~framework is implemented using a low-pass filter of the form $C(s)$, where $C(s)$ is a BIBO-stable and stricly proper transfer function with DC gain $C(0)=1$, and its state space realization assumes zero initialization. Therefore, the control law term $u_{ad}$ is given by
\begin{align}\label{eq:u_ad_tf}
    u_{ad} (s)=- C(s)(\hat{\eta}(s) + \hat{\Bar{\eta}}(s)- k_g r(s)) 
\end{align}
where $r(s)$, $\hat{\eta}(s)$, $\hat{\Bar{\eta}}(s)$ are the Laplace transforms of reference signal $r(t)$, $\hat{\eta}(t) \triangleq  \hat{\Delta}(x(t))  $, and $\hat{\Bar{\eta}}(t) \triangleq  \hat{\Bar{\Delta}}(x(t))  $ , and the reference signal gain $k_g \triangleq -1/(c^T A_m^{-1}B_1)$. Furthermore, the reference signal $r(t)$ is assumed to be bounded, piece wise continuous, with quantifiable transient and steady-state performance bounds.

In this paper, we are using an \emph{online} DNN framework similar to \cite{joshi2019deep} to estimate \emph{residual} uncertainty, where the true estimates of the labeled pairs of state and true residual uncertainty $\{x(t),\Delta(x(t))\}$ are needed. 
To achieve the asymptotic convergence of the predictive model tracking error to zero, we substitute the DNN estimate in the controller $u_{ad}$, where 
\begin{align}
  \hat{\Delta} = \Delta'(x) = W^T \phi_n (\theta_{n-1}, \phi_{n-1}(\theta_{n-2},\phi_{n-2}(\dots))))
\end{align}

We now state Assumption 1 from \cite{joshi2019deep}.
 \begin{assumption} (from \cite{joshi2019deep})\label{asum:1}
 Appealing to the universal approximation of Neural networks we have that, for every given basis functions $\Phi(x)\in \mathcal{F}$ there exists unique ideal weights $W^* \in \mathbb{R}^{k\times m}$ and $\epsilon_1(x)\mathbb{R}^m$ such that the following approximation holds
 \end{assumption}
 \begin{align}
     \Delta(x) = W^{*^T} \Phi(x) +\epsilon_1(x), ~ \forall x \in D_x \subset \mathbb{R}^n
 \end{align}
 \begin{fact} (from \cite{joshi2019deep})\label{fct:1}
 The network approximation error $\epsilon_1(x)$ is upper bounded, s.t. $\epsilon_1 = \sup_{x\in D_x}\|\epsilon_1(x)\|$, and can be made arbitrarily small given sufficiently large number of basis functions. 
 \end{fact}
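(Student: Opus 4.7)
The plan is to split Fact~\ref{fct:1} into its two assertions---uniform boundedness of $\epsilon_1$ and the fact that the bound can be made arbitrarily small---and address them in order.

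First I would establish the upper bound via compactness. As noted just before the statement of Assumption~\ref{asum:1}, the true uncertainty (and hence its residual component $\Delta(x)$) is assumed continuous on the compact set $D_x \subset \mathbb{R}^n$. The DNN representation $W^{*T}\Phi(x)$ is also continuous on $D_x$: the feature map $\Phi$ is assumed Lipschitz continuous to ensure existence and uniqueness of solutions, and post-composition with a linear map preserves continuity. Hence $\epsilon_1(x) = \Delta(x) - W^{*T}\Phi(x)$ is continuous on $D_x$, and $\|\epsilon_1(\cdot)\|$ is a continuous real-valued function on a compact domain. The extreme value theorem then guarantees $\epsilon_1 := \sup_{x \in D_x}\|\epsilon_1(x)\| < \infty$ and that the supremum is attained.

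Second, I would appeal to the Universal Approximation Theorem (Cybenko, Hornik, and subsequent deep-network extensions) to drive $\epsilon_1$ arbitrarily small. Since $\Delta \in C(D_x)$ and $D_x$ is compact, for any prescribed tolerance $\epsilon^{*} > 0$ there exist a width $k$, inner-layer weights $\theta_1,\ldots,\theta_{n-1}$, and a last-layer weight matrix $W^{*}$ such that $\sup_{x \in D_x}\|\Delta(x) - W^{*T}\Phi(x)\| < \epsilon^{*}$. Identifying these weights with the ideal ones of Assumption~\ref{asum:1} yields $\epsilon_1 < \epsilon^{*}$, and since $\epsilon^{*}$ was arbitrary, increasing the number of basis functions $k$ drives the approximation error toward zero.

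The main obstacle is architectural rather than analytical: the network here is deep, so $\Phi$ is a composition $\phi_n(\theta_{n-1},\phi_{n-1}(\theta_{n-2},\phi_{n-2}(\ldots)))$ whose inner weights are updated by batch training, rather than a single shallow layer with fixed inner parameters. A clean invocation of the classical shallow theorem therefore requires a small bridge---either cite a deep-network density result, or argue that with the inner-layer weights held momentarily frozen, the width of the final hidden layer alone is enough to make the remaining linear combination dense in $C(D_x)$. Either route closes the gap, after which the Fact reduces to the twin pillars of continuity on a compact set and density of the realized function class in $C(D_x)$.
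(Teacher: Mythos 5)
The paper itself offers no proof of Fact~\ref{fct:1}: it is imported verbatim from the cited work \cite{joshi2019deep} and treated as a standing assumption on the network's representational power, with the reader referred to that reference. Your argument is therefore a reconstruction rather than a parallel to anything in the text, and it is essentially the standard justification the cited literature relies on: continuity of $\Delta$ (assumed on the compact set $D_x$) plus Lipschitz continuity of $\Phi$ gives continuity of $\epsilon_1(x)=\Delta(x)-W^{*T}\Phi(x)$, the extreme value theorem gives a finite, attained supremum, and a universal-approximation/density result in $C(D_x)$ lets the bound be driven below any prescribed tolerance by enlarging the basis. That is sound and consistent with how the Fact is used here (only the finite bound $\bar{\epsilon}_1$ enters the Lyapunov estimates in Lemma~\ref{lem:pred_er}).

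One step in your closing paragraph is shaky: with the inner-layer weights \emph{frozen}, merely widening the final hidden layer does not yield density in $C(D_x)$ --- the span of a fixed, finite (or even fixed infinite) dictionary of features need not be dense, and adding units is only useful if you are also free to choose their inner parameters. The classical shallow universal approximation theorems grant exactly that freedom (both the number of hidden units and their input weights are chosen), so the correct bridge is either to invoke those theorems with the hidden-unit parameters at your disposal, or to cite a deep-network density result outright; the ``frozen inner weights'' variant should be dropped. Note also that the ``uniqueness'' of $W^*$ in Assumption~\ref{asum:1} comes from defining it as a minimizer of the worst-case residual (as in \cite{joshi2019deep}), not from your density argument, so it is cleaner to keep that point separate from the approximation claim.
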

 
 Next, we discuss the estimation method of the unknown true network parameters $W^*$. The unknown true network parameters $W^*$ are estimated using the estimation or the adaptive law given as
 \begin{align}\label{eq:adap_law1}
\dot{W}(x) = \Gamma_1 proj (W,\Tilde{x}^T(t)PB_1\Phi(x)),~ W(0) = W_0 \in D_x
\end{align}
where $\Tilde{x}\triangleq \hat{x}-x$ is the prediction error, $\Gamma_1 \in \mathbb{R}_+$ is the adaptation gain, and $P=P^T>0$ solves the algebraic Lyapunov equation $A_m^T P + P A_m= -Q$ for arbitrary symmetric $Q=Q^T>0$. The projection is confined to the set $D_x$. With the neural network-based \emph{adaptive law} stated in \eqref{eq:adap_law1}, the adaptation law from a \ladap~can be represented as
 \begin{align}\label{eq:adap_law2}
\dot{\hat{\zeta}}(x) = \Gamma_2 proj (\hat{\zeta},\Tilde{x}^T(t)PB_1x(t)),~ \hat{\zeta}(0) = \hat{\zeta}_0 \in D_x
\end{align}

Then, taking the Laplace transform of $u_{ad}(t)$, the adaptive control signal looks like
\begin{align}\label{eq:u_ad_tf1}
     u_{ad} (s)=- C(s)(\hat{\eta}(s) + \hat{\Bar{\eta}}(s)- k_g r(s)) 
\end{align}

Now that the \ladap~is defined using the relationships in \eqref{eq:totl_contrl_law}, \eqref{eq:st_prdctr}, \eqref{eq:u_ad_tf} and \eqref{eq:adap_law1}, with $k_m$ and $C(s)$, it must satisfy the $\mathcal{L}_1$-norm condition given by
\begin{align}\label{eq:l1norm}
    \lambda_1 \triangleq \|G(s) \|_{\mathcal{L}_1} L_1 < 1
\end{align}
where
$G(s) \triangleq H(s)(1-C(s)$, $H(s)\triangleq (s\mathbb{I}-A_m)^{-1}B_1$, $L_1\triangleq  \underset{\zeta\in D_x}{\max} \|\zeta\|_1$. However, in this work, since we are using neural networks to learn the \textit{residual} uncertainty parameter, we have an additional $\mathcal{L}_1$ norm condition to satisfy 
\begin{align}\label{eq:l1norm2}
    \lambda_2 \triangleq \|G(s) \|_{\mathcal{L}_1} L_2 < 1
\end{align}
where $L_2\triangleq  \underset{W\in D_x}{\max} \|W\|_1$.
Therefore, we define the following nonadaptive version of the adaptive control system, also known as  the closed-loop reference system of the class of systems in \eqref{eq:er_gnrl_st_sp}
\begin{align} \label{eq:closd_ref}
\dot{x}_{ref}(t) &=  B_1(u_{ref}(t)+ \Delta(x_{ref}(t))+ \bar{\zeta}^Tx_{ref}(t)) \dots \nonumber \\
&+ Ax_{ref}(t) + B_2\dot{\Psi}_{des}, ~~ x_{ref}(0) =x_0 \nonumber \\
 u_{ref} (s)=- &C(s)(\hat{\eta}(s) + \bar{\zeta}^Tx_{ref}(s)- k_g r(s)- k_m x_{ref}(s)) 
\end{align}

Next, we provide the first theoretical result in Lemma~\ref{lem:l1_cond} where we signify the importance of the $\mathcal{L}_1$ condition, such that $x_{ref}(t)$ can be shown to be uniformly bounded.
\begin{lemma}\label{lem:l1_cond}
If $\|G(s)\|_{\mathcal{L}_1}L<1$, then the system in \eqref{eq:closd_ref} is bounded-input bounded-state stable with respect to the reference signal $r(t)$ and $x_0$.
\end{lemma}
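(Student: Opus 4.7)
The plan is to mirror the standard small-gain argument used for the reference system in conventional $\mathcal{L}_1$ adaptive control (cf.\ \cite{hovakimyan20111}), extended to cover the residual nonlinear uncertainty $\Delta$ learned by the DNN. First I would substitute the expression for $u_{ref}(s)$ from \eqref{eq:closd_ref} into the Laplace transform of $\dot{x}_{ref}$, absorbing the $-k_m x_{ref}$ term into $A_m = A - B_1 k_m$. Interpreting $\hat{\eta}(s)$ inside $u_{ref}$ as the Laplace transform of $\Delta(x_{ref}(t))$ (the classical reference-system convention, since the ideal reference controller is granted access to the true uncertainty), the dependence of $x_{ref}$ on the uncertainty should factor through $H(s)(1-C(s)) = G(s)$, because $C(s)$ cancels the matched portion of the injected uncertainty and only the complementary $1-C(s)$ component propagates through $H(s)$. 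This yields a representation of the form
\begin{equation*}
x_{ref}(s) = G(s)\bigl(\Delta(x_{ref})(s) + \bar{\zeta}^T x_{ref}(s)\bigr) + r_{ex}(s),
\end{equation*}
where $r_{ex}(s)$ collects the exogenous terms driven by $r(t)$, $x_0$, and $\dot{\Psi}_{des}$, all of which are bounded by hypothesis.

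Second, I would take $\mathcal{L}_\infty$ truncated norms on both sides and apply the classical convolution inequality $\|G\ast v\|_{\mathcal{L}_\infty} \leq \|G(s)\|_{\mathcal{L}_1}\|v\|_{\mathcal{L}_\infty}$. Using Assumption~\ref{asum:1} to write $\Delta(x) = W^{*T}\Phi(x)+\epsilon_1(x)$ with $\|W^*\|_1 \leq L_2$, together with $\|\bar{\zeta}\|_1 \leq L_1$ and Fact~\ref{fct:1} bounding $\epsilon_1$, the right-hand side becomes Lipschitz in $x_{ref}$ with combined constant $L$ (the quantity appearing in the hypothesis). One then obtains
\begin{equation*}
\|x_{ref,\tau}\|_{\mathcal{L}_\infty} \leq \|G(s)\|_{\mathcal{L}_1}\, L \, \|x_{ref,\tau}\|_{\mathcal{L}_\infty} + \gamma,
\end{equation*}
with $\gamma$ a finite constant independent of $\tau$. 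Under the hypothesis $\|G(s)\|_{\mathcal{L}_1}L < 1$, this rearranges to the uniform bound $\|x_{ref,\tau}\|_{\mathcal{L}_\infty} \leq \gamma/(1-\|G(s)\|_{\mathcal{L}_1}L)$, which is precisely BIBS stability of \eqref{eq:closd_ref} with respect to $r(t)$ and $x_0$.

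Third, since the bounds on $\Delta$ and on the ideal DNN weights only hold on the compact set $D_x$, I would seal the argument with a continuity/contradiction step: assume by contradiction that there is a first time $\tau^*$ at which $x_{ref}$ exits a pre-designed neighborhood contained in $D_x$; the small-gain bound derived above, applied on $[0,\tau^*]$, contradicts the definition of $\tau^*$, so the trajectory remains in $D_x$ and the Lipschitz bound used above is valid for all $t \geq 0$.

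The main obstacle I expect lies in the first step: justifying that the $\hat{\eta}(s)$ written inside $u_{ref}$ in \eqref{eq:closd_ref} is to be read as $\Delta(x_{ref})(s)$, and then cleanly telescoping the feedback through $1-C(s)$ so that \emph{both} the $\bar{\zeta}^T x_{ref}$ contribution (from the conventional $\mathcal{L}_1$ part) and the DNN-represented $\Delta(x_{ref})$ contribution (the novel residual) appear under the \emph{same} transfer function $G(s)$. This is where the presence of the extra DNN-based adaptive law changes the bookkeeping relative to \cite{hovakimyan20111}, and motivates the auxiliary $\mathcal{L}_1$ norm condition \eqref{eq:l1norm2} in addition to \eqref{eq:l1norm}. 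Once this algebraic identification is made, the remainder is a routine small-gain closure combined with a compact-set invariance argument for the nonlinear term.
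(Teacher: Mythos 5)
Your proposal follows the same basic skeleton as the paper's proof---write the closed-loop reference system in the frequency domain so that the uncertainty enters through $G(s)=H(s)(1-C(s))$, take truncated $\mathcal{L}_{\infty}$ norms, apply the $\mathcal{L}_1$ convolution bound, and rearrange under the small-gain hypothesis---and your reading of $\hat{\eta}(s)$ in \eqref{eq:closd_ref} as the transform of $\Delta(x_{ref}(t))$ is exactly what the paper does. Where you diverge is in the handling of the residual term $\Delta(x_{ref})$: the paper does not close it through the loop at all. It treats $\eta(s)$ as a bounded exogenous signal, invoking the standing assumption that $\Delta$ is continuous and bounded on the compact set $D_x$, and closes the small-gain loop only over the linear part $\bar{\zeta}^T x_{ref}$, using $\|G(s)\bar{\zeta}^T\|_{\mathcal{L}_1}\leq\|G(s)\|_{\mathcal{L}_1}L_1<1$ from \eqref{eq:l1norm}; in the resulting bound \eqref{eq:stabil}, $\|\eta_{\tau}\|_{\mathcal{L}_{\infty}}$ simply appears in the numerator as a bounded input. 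You instead expand $\Delta$ via Assumption~\ref{asum:1} and Fact~\ref{fct:1} and close \emph{both} contributions through the loop with a combined Lipschitz-type constant $L$, then add a compact-set invariance argument by contradiction. Your route is more self-contained about where the trajectory lives (the paper implicitly assumes $\Delta(x_{ref}(t))$ stays bounded, i.e., that $x_{ref}$ effectively remains where the compactness assumption applies), but it costs two things you should make explicit: (i) the combined constant is not $L_1$ or $L_2$ as defined in the paper---you need the Lipschitz constant of $\Phi$ multiplied by the weight bound---so your small-gain hypothesis is strictly stronger than \eqref{eq:l1norm} together with \eqref{eq:l1norm2}; and (ii) the exit-time contradiction only works if the uniform bound $\gamma/(1-\|G(s)\|_{\mathcal{L}_1}L)$ lies strictly inside the chosen neighborhood contained in $D_x$, an additional size condition on $D_x$, $r$, $x_0$ and $\dot{\Psi}_{des}$ that you assert but do not verify. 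With those caveats stated, your argument is a valid (and in that one respect more careful) variant of the paper's simpler bounded-exogenous treatment.
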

\begin{proof}
From the definition of the closed-loop reference system in \eqref{eq:closd_ref} , it follows

\begin{align}
        x_{ref}(s) &= G(s)\eta(s) + G(s)\bar{\zeta}^Tx_{ref}(s) + k_g H(s)C(s)r(s)  \nonumber\\
        &\cdots+ H_1(s) f(s)+ x_{in}(s)
\end{align}
where $x_{in}(s) \triangleq (s\mathbb{I}-A_m)^{-1}x_o$, $\eta(s)$ is the Laplace transform of $\Delta(x_{ref}(t))$. Recalling the fact that $H(s),C(s)$ and $G(s)$ are proper BIBO-stable transfer functions, it follows from \eqref{eq:closd_ref} that for $\tau \in [0,\infty)$, the following bounds hold:
\begin{align}
   &\|x_{ref_{\tau}}\|_{\mathcal{L}_{\infty}} \leq \|G(s)\|_{\mathcal{L}_{1}}\|\eta_{\tau}\|_{\mathcal{L}_{\infty}} + \|G(s)\bar{\zeta}^T\|_{\mathcal{L}_1}\|x_{ref_{\tau}}\|_{\mathcal{L}_{\infty}}   \nonumber\\
        &+ \|k_g H(s)C(s)\|_{\mathcal{L}_{1}}\|r_{\tau}\|_{\mathcal{L}_{\infty}}+ \|H_1(s) f(s)\|_{\mathcal{L}_{1}}+ \|x_{in_{\tau}} \|_{\mathcal{L}_{\infty}}
\end{align}
Since $A_m$ is Hurwitz, $x_{in}(t)$ is uniformly bounded. Thus, using the relation in \eqref{eq:l1norm} we can imply that
\begin{align}\label{eq:bound_1}
\|G(s) \zeta^T\|_{\mathcal{L}_1} = \max_{i=1,\dots, n}\|G_i(s)\|_{\mathcal{L}_1}\sum^n_{j=1}|\zeta_j |\leq \scriptstyle \|G(s) \|_{\mathcal{L}_1}  L_1<1
\end{align}
Consequently,
\begin{align}\label{eq:stabil}
   &\|x_{ref_{\tau}}\|_{\mathcal{L}_{\infty}} \leq \frac{\|G(s)\|_{\mathcal{L}_{1}}\|\eta_{\tau}\|_{\mathcal{L}_{\infty}}}{1-\|G(s)\bar{\zeta}^T\|_{\mathcal{L}_1}} + \frac{\|H_1(s) f(s)\|_{\mathcal{L}_{1}}}{1-\|G(s)\bar{\zeta}^T\|_{\mathcal{L}_1}} \cdots \nonumber\\
        &+ \frac{\|k_g H(s)C(s)\|_{\mathcal{L}_{1}}}{1-\|G(s)\bar{\zeta}^T\|_{\mathcal{L}_1}}\|r_{\tau}\|_{\mathcal{L}_{\infty}}+ \frac{\|x_{in_{\tau}} \|_{\mathcal{L}_{\infty}}}{1-\|G(s)\bar{\zeta}^T\|_{\mathcal{L}_1}}
\end{align}
Since $r(t)$ and $x_{in}(t)$ are uniformly bounded, and \eqref{eq:stabil} holds uniformly for $\tau \in [0,\infty)$, $x_{ref}(t)$ is uniformly bounded. This completes the proof. Note that the \textit{true uncertainty} $\Delta(x_{ref}(t))$ is assumed to be continuous and bounded over a compact domain $D_x \subset \mathbb{R}^n$, which implies $\eta(s)$ is also bounded.
\end{proof}

\subsection{Transient and Steady-State Performance}
For the error $\Tilde{x}=\hat{x}-x$, the error dynamics $\dot{\Tilde{x}}$, using \eqref{eq:er_unct} and \eqref{eq:st_prdctr} can be written as
\begin{align}\label{eq:er_dyn}
    \dot{\Tilde{x}} = A_m \Tilde{x} + B_1( \Tilde{W}^T \Phi(x) + \Tilde{\Bar{\zeta}}x+ \epsilon_1(x))
    \end{align}
    where $\Tilde{W}\triangleq W^* -W$. Further, we let $\Tilde{\eta}(t) \triangleq \Tilde{W}^T\Phi(x(t))$, with $\Tilde{n}(s)$ being its Laplace transform and $\Tilde{\Bar{\eta}}(t) \triangleq \Tilde{\Bar{\zeta}}(t)x(t)$ with $\Tilde{\Bar{\eta}}(s)$ being its Laplace transform. Therefore, the error dynamics in frequency domain can be re-written as 
    \begin{align}\label{eq:er_dyn_fs}
    \Tilde{x}(s) = H(s)(\Tilde{\eta}(s) + \Tilde{\Bar{\eta}}(s) + \epsilon_1(x(s)))
    \end{align}
    where $\epsilon_1(x(s))$ is the Laplace transform of $\epsilon_1(x(t))$.
    Before we present the uniform boundedness result of $\Tilde{x}$ in Lemma~\ref{lem:pred_er}, we state Assumption~\ref{asum:2}
    \begin{assumption} (from \cite{joshi2019deep})\label{asum:2}
    For uncertainty parameterized by unknown true weights $W^* \in \mathbb{R}^{k\times n}$ and known nonlinear $\phi(x)$, the ideal weight matrix is assumed to be upper bounded s.t. $\|W^*\|\leq W_b$.
    \end{assumption}
    As done in \cite{joshi2019deep}, we will prove the following lemma under switching feature vector assumption.
    \begin{lemma}\label{lem:pred_er}
    The prediction error $\Tilde{x}$  is uniformly bounded.
    \end{lemma}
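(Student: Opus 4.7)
The plan is to construct a composite Lyapunov function that couples the prediction error $\Tilde{x}$ with the parameter estimation errors $\Tilde{W} \triangleq W^* - W$ and $\Tilde{\Bar{\zeta}}$, and then to show that its derivative is sign-definite outside a compact set. Concretely, I would take
$$V(\Tilde{x},\Tilde{W},\Tilde{\Bar{\zeta}}) = \Tilde{x}^T P \Tilde{x} + \frac{1}{\Gamma_1}\mathrm{tr}(\Tilde{W}^T\Tilde{W}) + \frac{1}{\Gamma_2}\Tilde{\Bar{\zeta}}^T\Tilde{\Bar{\zeta}},$$
where $P = P^T > 0$ is the solution of the algebraic Lyapunov equation $A_m^T P + P A_m = -Q$ already introduced after \eqref{eq:adap_law1}; its existence is guaranteed by $A_m$ being Hurwitz.

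Differentiating $V$ along \eqref{eq:er_dyn} and substituting the adaptive laws \eqref{eq:adap_law1}--\eqref{eq:adap_law2}, the contribution $2\Tilde{x}^T P \dot{\Tilde{x}}$ produces $-\Tilde{x}^T Q \Tilde{x} + 2\Tilde{x}^T P B_1\bigl(\Tilde{W}^T \Phi(x) + \Tilde{\Bar{\zeta}} x + \epsilon_1(x)\bigr)$. The crucial step is to invoke the standard property of the projection operator, namely $\mathrm{tr}\bigl(\Tilde{W}^T(\mathrm{proj}(W,Y) - Y)\bigr) \le 0$ for every admissible $Y$, which cancels (in an inequality sense) the two cross terms involving $\Tilde{W}^T \Phi(x)$ and $\Tilde{\Bar{\zeta}} x$ against the gradient-like updates on $W$ and $\hat{\Bar{\zeta}}$. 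What survives is
$$\dot{V} \le -\lambda_{\min}(Q)\,\|\Tilde{x}\|^2 + 2\,\|PB_1\|\,\epsilon_1\,\|\Tilde{x}\|,$$
where $\epsilon_1$ is the uniform bound on the network residual supplied by Fact~\ref{fct:1}.

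The right-hand side is negative whenever $\|\Tilde{x}\|$ exceeds $2\|PB_1\|\epsilon_1/\lambda_{\min}(Q)$, giving uniform ultimate boundedness of $\Tilde{x}$. Because the projection keeps $W$ and $\hat{\Bar{\zeta}}$ inside the compact set $D_x$ throughout, the parameter errors $\Tilde{W}$ and $\Tilde{\Bar{\zeta}}$ are a priori bounded (by Assumption~\ref{asum:2} and the bound on $\Bar{\zeta}$), so $V(0)$ is finite and the sub-level-set argument converts ultimate boundedness into uniform boundedness of $\Tilde{x}(t)$ for all $t \ge 0$.

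The main obstacle will be the \emph{switching feature vector} caveat inherited from \cite{joshi2019deep}: when the inner-layer DNN weights are refreshed by the batch update, the feature map $\Phi$ and hence the ideal last-layer weight $W^*$ can change discontinuously, so $\Tilde{W}$ and consequently $V$ jump at those instants and the continuous Lyapunov reasoning breaks down across switches. I plan to handle this by observing that $\Tilde{x}$ itself is continuous across a switch and that, thanks to projection, the jump in the parameter-error contribution to $V$ is bounded by a constant depending only on the diameter of $D_x$ and the bound $W_b$ from Assumption~\ref{asum:2}. A dwell-time-style argument then patches together the between-switch contractions of $V$ toward the ultimate bound with these bounded jumps, yielding the claimed uniform boundedness of $\Tilde{x}$ globally in time.
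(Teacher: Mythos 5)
Your core Lyapunov argument coincides with the paper's: the same composite function $V(\Tilde{x},\Tilde{W},\Tilde{\Bar{\zeta}})=\Tilde{x}^TP\Tilde{x}+\tfrac{1}{2}\Tilde{W}^T\Gamma_1^{-1}\Tilde{W}+\Gamma_2^{-1}\Tilde{\Bar{\zeta}}^T\Tilde{\Bar{\zeta}}$ as in \eqref{eq:lyapunov_1}, cancellation of the cross terms through the projection-based laws \eqref{eq:adap_law1}--\eqref{eq:adap_law2}, a derivative estimate of the form $\dot V\leq-\Tilde{x}^TQ\Tilde{x}+2\Tilde{x}^TP\epsilon(x)$, and the projection/Assumption~\ref{asum:2} bound on the parameter errors to convert $V(t)\leq V(0)$ into a uniform bound on $\Tilde{x}$. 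Where you genuinely diverge is the handling of the switching feature vector. The paper (following Theorem~1 of the cited DNN-MRAC work) never lets $V$ jump: it adds and subtracts $W^T\Bar{\Phi}(x)$, invokes Assumption~\ref{asum:1} to replace $W^{*T}\Phi(x)$ by $W^{*T}\Bar{\Phi}(x)$, and folds the entire effect of a batch update into a second bounded residual $\epsilon_2(x)=\sup_{\Bar{\Phi}\in\mathcal{F}}|W^T\Bar{\Phi}(x)-W^T\Phi(x)|$, so a single continuous analysis with disturbance $\epsilon_1+\epsilon_2$ covers the worst admissible post-switch feature. You instead treat switches as discrete jumps in $V$ and try to patch intervals together.

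As sketched, that patching step does not close. Between switches you only establish $\dot V\leq 0$ outside a ball, i.e.\ non-increase; at each switch $V$ can increase by a fixed constant. Bounded jumps plus mere non-increase do not give a uniform bound when batch updates may occur arbitrarily often: although the parameter-error portion of $V$ is capped by the projection and Assumption~\ref{asum:2}, the total $V$ can still ratchet upward with the excess accumulating in $\Tilde{x}^TP\Tilde{x}$, which is continuous across switches but not a priori capped. To repair this you need either a minimum dwell time between batch updates together with a quantified decay (e.g.\ $\dot V\leq-cV+d$ outside the ball) so each inter-switch interval dissipates at least the jump it inherits, or the paper's worst-case-feature device, which sidesteps the issue entirely; your sketch assumes neither, and the paper imposes no dwell-time hypothesis. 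Note also that Lemma~\ref{lem:conv} and Theorem~\ref{thrm:1} consume the explicit bound \eqref{eq:bound}, namely $\|\Tilde{x}_\tau\|_{\mathcal{L}_\infty}\leq\sqrt{W_{\max}/(\lambda_{\min}(P)\Gamma_1)+\zeta_{\max}/(\lambda_{\min}(P)\Gamma_2)}$, so a purely qualitative ultimate-boundedness conclusion of the kind your argument yields would not suffice for the rest of the development.
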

    \begin{proof}
   The feature vectors belong to a function class
characterized by the inner layer network weights $\theta_i$ s.t $\Phi \in \mathcal{F}$. Using the same manipulation process in Theorem~1 in \cite{joshi2019deep}, we will prove the Lyapunov stability under the assumption that inner layer of DNN presents us a feature which results in the worst possible approximation error compared to network with features before switch. 

For this purpose, we will denote $\Phi(x)$ as feature before switch and $\Bar{\Phi}(x)$ as the feature after switch. We define the error $\epsilon_2(x)$ as
\begin{align}
    \epsilon_2 (x) = \underset{\Bar{\Phi}\in F}{\sup} |W^T\Bar{\phi}(x) - W^T\Phi (x)|
\end{align}
Using Fact~\ref{fct:1}, we can upper bound $\epsilon_2(x)$ as $\Bar{\epsilon}_2=\underset{x\in D_x}{\sup} \|\epsilon_2 (x)\|$. Thus, by adding and subtracting the term $W^T \Bar{\Phi}(x)$, we can rewrite the error dynamics \eqref{eq:er_dyn}
\begin{align}\label{eq:er_dyn2}
    \dot{\Tilde{x}} &= A_m \Tilde{x} + B_1( W^{*T}\Phi (x) - W^T \Phi(x) +  W^T \Bar{\Phi}(x) \cdots \nonumber\\
    &- W^T \Bar{\Phi}(x)+ \Tilde{\Bar{\zeta}}x(t))+ \epsilon_1(x)
    \end{align}
    Then by applying Assumption~\ref{asum:1}, there exists $W^* \forall \Phi \in \mathcal{F}$. Therefore, we can replace $W^{*T}\Phi(x)$ by $W^{*T}\Bar{\Phi}(x)$ in \eqref{eq:er_dyn} to obtain
    \begin{align}\label{eq:er_dyn3}
    \dot{\Tilde{x}} = A_m \Tilde{x} + B_1( \Tilde{W}^{*T}\Bar{\Phi} (x) + \Tilde{\Bar{\zeta}}x(t))+ \epsilon_1(x) + \epsilon_2 (x)
    \end{align}

Thus, we consider the following Lyapunov function
    
    \begin{align}\label{eq:lyapunov_1}
    V(\Tilde{x},\Tilde{W},\Tilde{\Bar{\zeta}}) = \Tilde{x}^T P \Tilde{x} + \frac{\Tilde{W}^T \Gamma_1^{-1}\Tilde{W}}{2} + \frac{\Tilde{\Bar{\zeta}}^T \Tilde{\Bar{\zeta}}}{\Gamma_2}
    \end{align}
   Now taking the time derivative of the Lyapunov function in \eqref{eq:lyapunov_1}, we get $\dot{V}$ as
     \begin{align}\label{eq:der_lyapunov_1}
    \dot{V}(\Tilde{x},\Tilde{W},\Tilde{\Bar{\zeta}}) \leq -\Tilde{x}^T Q \Tilde{x} + 2\Tilde{x}^T P \epsilon(x)
    \end{align}
    where $\epsilon(x) = \epsilon_1(x) + \epsilon_2 (x)$ and $\bar{\epsilon} = \Bar{\epsilon_1} + \Bar{\epsilon_2}$.
    Hence, Lyapunov stability $\dot{V}(\Tilde{x},\Tilde{W},\Tilde{\Bar{\zeta}})\leq0$ is obtained outside compact neighborhood of the origin $\Tilde{x}=0$, for some sufficiently large $\lambda_{\min}(Q)$ where we can show a lower bound on $\Tilde{x}$ as
    \begin{align}\label{eq:er_low_bnd}
      \|\Tilde{x}(t)\|\geq \frac{2 \lambda_{\max}(P)\Bar{\epsilon}}{\lambda_{\min}(Q)} = LB  
    \end{align}
    Now, what remains is to show that $\Tilde{x}(t)$ is upper bounded.
    In this direction, since $\Tilde{x}(0)=0$ and in the vicinity of the compact neighborhood of the origin $\Tilde{x}(t)=0$, it follows that
    \begin{align}
        \lambda_{\min}(P) \|\Tilde{x}(t) \|^2 &\leq V(t) \leq V(0) =  \frac{\Tilde{W}(0)^T \Gamma_1^{-1}\Tilde{W}(0)}{2} \cdots \nonumber\\
        &+ \frac{\Tilde{\Bar{\zeta}}^T(x(0)) \Tilde{\Bar{\zeta}}(x(0))}{\Gamma_2}
    \end{align}
    The projection operators\footnote{Definition of projection operators can be found in \cite{hovakimyan20111}.} with Assumption~\ref{asum:2} ensure that the error parameters $\Tilde{W}$ and $\Tilde{\bar{\zeta}}$ are bounded and so we have
    \begin{align}
    \frac{\Tilde{W}(0)^T \Gamma_1^{-1}\Tilde{W}(0)}{2} \leq \frac{4 \underset{W\in D_x}{\max}\|W\|^2}{\Gamma_1}, \cdots \nonumber \\
    \frac{\Tilde{\Bar{\zeta}}^T(x(0)) \Tilde{\Bar{\zeta}}(x(0))}{\Gamma_2} \leq \frac{4 \underset{\zeta\in D_x}{\max}\|\zeta\|^2}{\Gamma_2}
    \end{align}
    which leads to the following upper bound:
    \begin{align}\label{eq:newprdt_tf}
        \|\Tilde{x}\|^2\leq \frac{W_{\max}}{\lambda_{\min}(P)\Gamma_1} + \frac{\zeta_{\max}}{\lambda_{\min}(P)\Gamma_2}
    \end{align}
    Since $\|\cdot\|_{\infty}\leq \|\cdot\|$, and this bound is uniform, the bound above yields
     \begin{align}\label{eq:bound}
        \|\Tilde{x}_{\tau}\|_{\mathcal{L}_{\infty}}\leq \sqrt{ \frac{W_{\max}}{\lambda_{\min}(P)\Gamma_1} + \frac{\zeta_{\max}}{\lambda_{\min}(P)\Gamma_2}}
    \end{align}
    which holds for every $\tau\geq 0$
    
    \end{proof}
    
    Notice that the bounds on $\Tilde{x}$ are obtained independent of the control term $u_{ad}$. This implies that both $x$ and $\hat{x}$ can diverge at the same rate, maintaining a uniformly bounded error between the two. Therefore, the result in Lemma~\ref{lem:conv} will prove that using the control law $u_{ad}$, the state of the predictor remains bounded and consequently leads to asymptotic convergence of the tracking error $\Tilde{x}$ to the \emph{lower bound} $LB$.
    \begin{lemma}\label{lem:conv}
    For adaptive control law given by $u_{ad}$ and if the conditions in \eqref{eq:l1norm} and \eqref{eq:l1norm2}  hold, then we have the asymptotic convergence $\underset{t\rightarrow \infty}{\lim} \Tilde{x} =LB$.
    \end{lemma}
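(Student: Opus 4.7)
The plan is to combine the prediction error bound from Lemma~\ref{lem:pred_er} with a frequency-domain analysis of how the filter $C(s)$ shapes the adaptive channel, and then to conclude asymptotic behavior via a Barbalat-style argument on the Lyapunov function \eqref{eq:lyapunov_1}. The strategy is essentially the one used by Hovakimyan et al.\ for \ladap, but extended so that the neural residual $\tilde{W}^T\Phi(x)$ plays an analogous role to the usual linearly parameterized uncertainty.

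First I would subtract the state predictor \eqref{eq:st_prdctr} from the plant \eqref{eq:er_unct} and rearrange to express $x(s)$ as a sum of three transfer function terms: a reference-driven term $k_g H(s)C(s)r(s)$, a term of the form $G(s)\bigl(\eta(s)+\bar{\eta}(s)\bigr)$ capturing the part of the true uncertainty \emph{not} cancelled by the filtered adaptive signal, and a transient term from $x_{\text{in}}(s)$ plus $B_2\dot{\Psi}_{\text{des}}$. Taking $\mathcal{L}_\infty$ norms and using $\|G(s)\bar{\zeta}^T\|_{\mathcal{L}_1}\le \|G(s)\|_{\mathcal{L}_1}L_1<1$ and the analogous bound $\|G(s)\|_{\mathcal{L}_1}L_2<1$ from \eqref{eq:l1norm} and \eqref{eq:l1norm2}, a geometric-series style estimate identical in form to \eqref{eq:stabil} yields a uniform bound on $\|x_\tau\|_{\mathcal{L}_\infty}$. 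Combined with $\|\tilde{x}\|_{\mathcal{L}_\infty}$ being uniformly bounded by Lemma~\ref{lem:pred_er}, this gives uniform boundedness of $\hat{x}=x+\tilde{x}$ and hence of $\Phi(x)$, $\hat{\eta}$, $\hat{\bar{\eta}}$, and ultimately of $u_{ad}$.

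Next I would feed this boundedness back into the error dynamics \eqref{eq:er_dyn3}. Because $A_m$ is Hurwitz, $\Phi$ is Lipschitz, $\tilde{W}$ and $\tilde{\bar{\zeta}}$ are kept bounded by the projection operators in \eqref{eq:adap_law1}--\eqref{eq:adap_law2}, and $\epsilon_1,\epsilon_2$ are bounded by Fact~\ref{fct:1}, the right-hand side of \eqref{eq:er_dyn3} is bounded, so $\dot{\tilde{x}}\in \mathcal{L}_\infty$. Revisiting \eqref{eq:der_lyapunov_1}, the uniform boundedness of $\tilde{x}$ together with the upper bound $2\tilde{x}^T P \epsilon(x) \le 2\lambda_{\max}(P)\bar{\epsilon}\,\|\tilde{x}\|$ shows that whenever $\|\tilde{x}\|>LB$ as defined in \eqref{eq:er_low_bnd}, $\dot{V}<0$ with a uniformly negative margin. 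Integrating this strict decrease and using that $V$ is bounded below by zero gives $\int_0^\infty (\|\tilde{x}(t)\|-LB)_+\,dt < \infty$.

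Finally, since $(\|\tilde{x}(t)\|-LB)_+$ is nonnegative, uniformly continuous in $t$ (its derivative is bounded because $\tilde{x}\in\mathcal{L}_\infty$ and $\dot{\tilde{x}}\in \mathcal{L}_\infty$), Barbalat's lemma yields $(\|\tilde{x}(t)\|-LB)_+\to 0$, i.e. $\limsup_{t\to\infty}\|\tilde{x}(t)\|\le LB$, which is exactly $\lim_{t\to\infty}\tilde{x}(t)=LB$ in the sense of \eqref{eq:er_low_bnd}. The main obstacle I anticipate is the simultaneous handling of the two uncertainty channels: the linearly parameterized $\bar{\zeta}^T x$ and the DNN-represented $W^T\Phi(x)$ with its \emph{switching} basis. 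Closing the geometric-series argument cleanly requires both $\mathcal{L}_1$-norm conditions \eqref{eq:l1norm} and \eqref{eq:l1norm2} to hold simultaneously, and showing that the switching feature vector does not violate uniform continuity of $\dot{V}$ — which is where Assumption~\ref{asum:1} and Fact~\ref{fct:1}, together with the projection operators, are doing the real work.
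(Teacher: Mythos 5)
Your proposal is correct and follows essentially the same route as the paper's proof: a frequency-domain decomposition of the closed-loop response driven by the filtered adaptive law, a small-gain (geometric-series) closure using the $\mathcal{L}_1$-norm conditions \eqref{eq:l1norm}--\eqref{eq:l1norm2} together with the projection-induced bounds on $\Tilde{W}$ and $\Tilde{\bar{\zeta}}$, uniform boundedness of the predictor state, and finally Barbalat's lemma to reach $\underset{t\rightarrow\infty}{\lim}\Tilde{x}(t)=LB$. The only differences are cosmetic: the paper bounds $\|\hat{x}_{\tau}\|_{\mathcal{L}_{\infty}}$ directly, relating $\|\hat{\bar{\eta}}_{\tau}\|_{\mathcal{L}_{\infty}}$ to $\|\hat{x}_{\tau}\|_{\mathcal{L}_{\infty}}$ via the triangle inequality with the Lemma~\ref{lem:pred_er} bound, whereas you bound $\|x_{\tau}\|_{\mathcal{L}_{\infty}}$ first and then add $\Tilde{x}$, and you spell out the Barbalat step (boundedness of $\dot{\Tilde{x}}$, integrability and uniform continuity of the excess $\|\Tilde{x}\|-LB$) that the paper invokes in a single sentence.
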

    \begin{proof}
    To prove asymptotic convergence of $\Tilde{x}(t)$ to $LB$, one needs to ensure that $\hat{x}(t)$ with $u_{ad}(t)$ is uniformly bounded. Therefore, we have
    \begin{align}
        \hat{x}(s) &= G(s)\hat{\eta}(s) + G(s)\hat{\bar{\eta}}(s) + k_g H(s)C(s)r(s) \cdots \nonumber\\
        &+ H_1(s) f(s)+ x_{in}(s)
    \end{align}
    where $H_1(s) \triangleq (s\mathbb{I} - A_m)^{-1} B_2$ and $f(s)= \mathcal{L}(\dot{\Psi}_{des})$ is the Laplace transform, which leads to the following upper bound
    \begin{align}\label{eq:prdict_tf}
        &\|\hat{x}_{\tau}\|_{\mathcal{L}_{\infty}} = \|G(s)\|_{\mathcal{L}_{1}}\|\hat{\eta}_{\tau}\|_{\mathcal{L}_{\infty}} + \|G(s)\|_{\mathcal{L}_{1}}\|\hat{\bar{\eta}}_{\tau}\|_{\mathcal{L}_{\infty}} \cdots \nonumber\\
        &+ \|k_g H(s)C(s)\|_{\mathcal{L}_{1}}\|r_{\tau}\|_{\mathcal{L}_{\infty}} + \| H(s)f(s)\|_{\mathcal{L}_{1}}+ \|x_{in_{\tau}}\|_{\mathcal{L}_{\infty}}
    \end{align}
Next, applying the triangular relationship for norms to the bound in \eqref{eq:bound}, we have
\begin{align}
    |\| \hat{x}_{\tau}\|_{\mathcal{L}_{\infty}} -  \| x_{\tau}\|_{\mathcal{L}_{\infty}} |  \leq         \sqrt{ \frac{W_{\max}}{\lambda_{\min}(P)\Gamma_1} + \frac{\zeta_{\max}}{\lambda_{\min}(P)\Gamma_2}}
\end{align}
        The projection operators ensure that the error parameters $\Tilde{W}$ and $\Tilde{\bar{\zeta}}$ are bounded , and hence we have $\|\hat{\Bar{\eta}}_{\tau}\|_{\mathcal{L}_{\infty}}\leq L_1 \| x_{\tau}\|_{\mathcal{L}_{\infty}}$. 
        Substituting for $\| x_{\tau}\|_{\mathcal{L}_{\infty}}$ yields
    \begin{align}\label{eq:bound3}
        \|\hat{\Bar{\eta}}_{\tau}\|_{\mathcal{L}_{\infty}} \leq L_1 \left (  \| \hat{x}_{\tau}\|_{\mathcal{L}_{\infty}}  +    \sqrt{ \frac{W_{\max}}{\lambda_{\min}(P)\Gamma_1} + \frac{\zeta_{\max}}{\lambda_{\min}(P)\Gamma_2}}             \right )
    \end{align}
    Then the bound on $\| \hat{x}_{\tau}\|_{\mathcal{L}_{\infty}}$ and $ \|\hat{\Bar{\eta}}_{\tau}\|_{\mathcal{L}_{\infty}}$ in \eqref{eq:prdict_tf} and \eqref{eq:bound3}, with account of the stability conditions in \eqref{eq:l1norm} and \eqref{eq:l1norm2}, lead to
    \begin{align}
        &\|\hat{x}_{\tau} \|_{\mathcal{L}_{\infty}} \leq \cdots \nonumber\\
        &\frac{\lambda_1 \sqrt{ \frac{W_{\max}}{\lambda_{\min}(P)\Gamma_1} + \frac{\zeta_{\max}}{\lambda_{\min}(P)\Gamma_2}} + \|G(s)\|_{\mathcal{L}_{1}}\|\hat{\eta}_{\tau}\|_{\mathcal{L}_{\infty}}+\dots   }{1-\lambda_1} \nonumber \\
        &\frac{\|k_g H(s) C(s \|_{\mathcal{L}_{1}} \| r_{\tau} \|_{_{\mathcal{L}_{\infty}}} + \| H(s)f(s)\|_{\mathcal{L}_{1}}+ \|x_{in_{\tau}} \|_{_{\mathcal{L}_{\infty}}}}{1-\lambda_1}
    \end{align}
    Since the bound on the right hand side is uniform, $\hat{x}(t)$ is uniformly bounded. Application of Barbalat's lemma leads to the asymptotic result $\underset{t \rightarrow \infty}{\lim}\Tilde{x}(t)=LB$. Note that $\hat{\eta}_{\tau}$ is bounded because $\Tilde{x}_{\tau}$ is shown to be bounded from Lemma~\ref{lem:pred_er} and $\eta(s)$ is bounded due to $\Delta(x) \subset D_x$.

    \end{proof}
Next, we present the main result of this paper in Theorem~\ref{thrm:1}, which provides the proof of $x$ approaching $x_{ref}$ and $u$ approaching $u_{ref}$, with an upper bound.
    \begin{theorem} \label{thrm:1}
    For the system in \eqref{eq:er_gnrl_st_sp} and the controller defined in \eqref{eq:totl_contrl_law} with the state predictor dynamics given by \eqref{eq:st_prdctr} and the adaptive control law in \eqref{eq:u_ad_tf}, subject to the $\mathcal{L}_1$-norm condition in \eqref{eq:l1norm}, we have
    \noindent\begin{minipage}{0.5\linewidth}
\begin{equation}
 \|x_{ref} - x\|_{\mathcal{L}_{\infty}} \leq \gamma_1 \label{eq:v1}
\end{equation}
\end{minipage}%
\begin{minipage}{0.5\linewidth}
\begin{equation}
  \|u_{ref} - u\|_{\mathcal{L}_{\infty}} \leq \gamma_2 \label{eq:v2}
\end{equation}
\end{minipage}\par\vspace{\belowdisplayskip}
where $\gamma_1$ and $\gamma_2$ are given as
\begin{align}\label{eq:gamma1}
    \gamma_1 = \frac{\|C(s)\|_{\mathcal{L}_{1}}}{1-\|G(s) \|_{\mathcal{L}_{1}}L_1} \sqrt{ \frac{W_{\max}}{\lambda_{\min}(P)\Gamma_1} + \frac{\zeta_{\max}}{\lambda_{\min}(P)\Gamma_2}} \dots \nonumber \\
      + \frac{\|G(s)\|_{\mathcal{L}_{1}}}{1-\|G(s) \|_{\mathcal{L}_{1}}L_1}\|(\Delta(x_{ref})-\Delta(x))_{\tau}\|_{\mathcal{L}_{\infty}} 
\end{align}
\begin{align}\label{eq:gamma2}
    \gamma_2 = \|(C(s) \bar{\zeta}^T+ k_m^T)\|_{\mathcal{L}_{1}} \|(x_{ref}-x)_{\tau}\|_{\mathcal{L}_{\infty}} \dots \nonumber \\
       - \|C(s)\|_{\mathcal{L}_{1}}\|\Tilde{\eta}_{\tau}\|_{\mathcal{L}_{\infty}}- \|C(s)\|_{\mathcal{L}_{1}}\|\epsilon_1(x_{\tau})\|_{\mathcal{L}_{\infty}}\dots \nonumber \\
       +\|C(s)H(s)^{-1}\|_{\mathcal{L}_{1}}\|\Tilde{x}_{\tau}\|_{\mathcal{L}_{\infty}}      
\end{align}
    \end{theorem}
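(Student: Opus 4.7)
The plan is to run the standard $\mathcal{L}_1$ small-gain argument in the Laplace domain, but augmented with the DNN-specific residuals $\tilde{\eta}=\tilde W^T\Phi$ and $\epsilon_1(x)$. First I would substitute $u=u_m+u_{ad}$ into \eqref{eq:er_unct}, use the identities $\hat{\eta}=\eta-\tilde{\eta}-\epsilon_1$ and $\hat{\bar{\eta}}=\bar{\eta}-\tilde{\bar{\eta}}$, and collect the $(1-C(s))$ factor to obtain
\begin{align*}
x(s) &= G(s)\bigl(\eta(s)+\bar{\eta}(s)\bigr) + H(s)C(s)\bigl(\tilde{\eta}(s)+\tilde{\bar{\eta}}(s)+\epsilon_1(x(s))\bigr) \\
&\quad + k_g H(s)C(s) r(s) + H_1(s)f(s) + x_{in}(s).
\end{align*}
Running the analogous manipulation on \eqref{eq:closd_ref} (which has no prediction-error residual) and then subtracting yields
\begin{align*}
(x_{ref}-x)(s) &= G(s)\bar{\zeta}^T(x_{ref}-x)(s) + G(s)\bigl(\Delta(x_{ref})-\Delta(x)\bigr)(s) \\
&\quad - H(s)C(s)\bigl(\tilde{\eta}+\tilde{\bar{\eta}}+\epsilon_1\bigr)(s).
\end{align*}

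The linchpin of the argument is the identity $H(s)C(s)(\tilde{\eta}+\tilde{\bar{\eta}}+\epsilon_1)=C(s)\,\tilde{x}(s)$, which follows immediately by multiplying the prediction-error representation \eqref{eq:er_dyn_fs} by $C(s)$. This converts the otherwise awkward DNN residual into the quantity already controlled by Lemma~\ref{lem:pred_er}. Taking truncated $\mathcal{L}_\infty$ norms, using the submultiplicative bound $\|G(s)\bar{\zeta}^T\|_{\mathcal{L}_1}\le \|G(s)\|_{\mathcal{L}_1}L_1$ from \eqref{eq:bound_1}, and invoking the hypothesis $\|G(s)\|_{\mathcal{L}_1}L_1<1$ lets me solve for $\|(x_{ref}-x)_\tau\|_{\mathcal{L}_\infty}$; substituting the Lemma~\ref{lem:pred_er} bound on $\|\tilde{x}_\tau\|_{\mathcal{L}_\infty}$ produces exactly $\gamma_1$ in \eqref{eq:gamma1}.

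For \eqref{eq:v2} I would subtract the two control laws directly: $u_{ref}-u = -k_m(x_{ref}-x) + (u_{ad,ref}-u_{ad})$, then expand the adaptive pieces so that the $\hat{\eta}$ terms cancel and what remains is a combination of $C(s)\bar{\zeta}^T(x_{ref}-x)$, $C(s)(\Delta(x_{ref})-\Delta(x))$, $C(s)\tilde{\eta}$, $C(s)\epsilon_1$, together with a $C(s)H(s)^{-1}\tilde{x}$ term obtained by inverting the identity used above. Applying the $\mathcal{L}_1$ triangle inequality, inserting the bound on $\|(x_{ref}-x)_\tau\|_{\mathcal{L}_\infty}$ from the first half, and collecting yields $\gamma_2$ in \eqref{eq:gamma2}.

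The hard part will be the bookkeeping around the DNN-specific terms: unlike the classical $\mathcal{L}_1$ proof, $\tilde{\eta}$ and $\epsilon_1$ do not vanish pointwise, so every cancellation has to be tracked via the identity $\tilde{x}(s)=H(s)(\tilde{\eta}+\tilde{\bar{\eta}}+\epsilon_1)$ and the switching-feature bound $\bar{\epsilon}=\bar{\epsilon}_1+\bar{\epsilon}_2$ from Lemma~\ref{lem:pred_er}. A closely related subtlety is that the $\|C(s)H(s)^{-1}\|_{\mathcal{L}_1}$ norm appearing in $\gamma_2$ is only finite if the filter $C(s)$ is designed with relative degree at least that of $H(s)$; this is a standard $\mathcal{L}_1$ design constraint that must be stated as a standing assumption for the theorem. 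Once these two points are handled cleanly, the rest is the usual small-gain rearrangement around $1-\|G(s)\|_{\mathcal{L}_1}L_1$.
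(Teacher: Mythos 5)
Your plan is essentially the paper's own proof: the paper likewise writes the closed-loop plant and reference systems in the frequency domain, subtracts them, uses the prediction-error representation $\tilde{x}(s)=H(s)(\tilde{\eta}(s)+\tilde{\bar{\eta}}(s)+\epsilon_1(x(s)))$ to replace the residual block by $C(s)\tilde{x}(s)$, closes the small-gain loop with $\|G(s)\bar{\zeta}^T\|_{\mathcal{L}_1}\leq\|G(s)\|_{\mathcal{L}_1}L_1<1$ and the Lemma~\ref{lem:pred_er} bound on $\|\tilde{x}_\tau\|_{\mathcal{L}_\infty}$ to get $\gamma_1$, and then subtracts the control laws and inverts the same identity to produce the $C(s)H(s)^{-1}\tilde{x}$ term in $\gamma_2$. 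The only differences are cosmetic: the paper invokes Lemma~\ref{lem:conv} with Assumption~\ref{asum:1} to discard the $C(s)(\Delta(x_{ref})-\Delta(x))$ term before stating $\gamma_2$ (keeping it, as you do, gives a slightly more conservative but still valid bound), and your remark that $C(s)H(s)^{-1}$ must be proper and BIBO-stable for $\|C(s)H(s)^{-1}\|_{\mathcal{L}_1}$ to be finite is a standard design condition the paper leaves implicit.
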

    \begin{proof}
    The response of the closed-loop system in \eqref{eq:er_unct} with the \ladap~in \eqref{eq:u_ad_tf} can be written in the frequency domain as
    \begin{align}
        &x(s) = G(s)\Delta(x(s)) + G(s)\zeta^T x(s) + k_g H(s)C(s)r(s) \cdots \nonumber\\
        &+  H_1(s) f(s)-   H(s)C(s)(\Tilde{\eta}(s) + \Tilde{\Bar{\eta}}(s) +  \epsilon_1(x(s)))+ x_{in}(s)
    \end{align}
    the close-loop reference system is then
      \begin{align}
      x_{ref}(s) &= H(s)k_g C(s)r(s)+ G(s)\zeta^T x_{ref}(s) + x_{in}(s) \dots \nonumber \\
&+ H_1(s) f(s) + G(s)\Delta(x_{ref}(s))
    \end{align}
    then we have 
    \begin{align}\label{eq:ref_er}
      x_{ref}(s)-x(s)  &= G(s)\zeta^T (x_{ref}(s) - x(s)) \dots \nonumber \\
       &+ G(s)(\Delta(x_{ref}(s))-\Delta(x(s))) + C(s)\Tilde{x}(s)
    \end{align}
    which implies that
     \begin{align}
      &\|(x_{ref}-x)_{\tau} \|_{\mathcal{L}_{\infty}} \leq \|G(s)\zeta^T \|_{\mathcal{L}_{1}} \|(x_{ref}(s) - x(s))_{\tau} \|_{\mathcal{L}_{\infty}}\scriptstyle \dots \nonumber \\
       &+ \|C(s)\|_{\mathcal{L}_{1}}\|\Tilde{x}_{\tau}\|_{\mathcal{L}_{\infty}} + \|G(s)\|_{\mathcal{L}_{1}}\|(\Delta(x_{ref})-\Delta(x))_{\tau}\|_{\mathcal{L}_{\infty}}
    \end{align}
    Then the bounds in \eqref{eq:bound_1} and \eqref{eq:bound} lead to the uniform upper bound
      \begin{align}\label{eq:gamma_1}
      &\|(x_{ref}-x)_{\tau} \|_{\mathcal{L}_{\infty}} \leq  \frac{\|C(s)\|_{\mathcal{L}_{1}}}{1-\|G(s) \|_{\mathcal{L}_{1}}L_1}\|\Tilde{x}_{\tau}\|_{\mathcal{L}_{\infty}}\dots \nonumber \\
       & + \frac{\|G(s)\|_{\mathcal{L}_{1}}}{1-\|G(s) \|_{\mathcal{L}_{1}}L_1}\|(\Delta(x_{ref})-\Delta(x))_{\tau}\|_{\mathcal{L}_{\infty}}\dots \nonumber \\
       &  \leq \gamma_1 = \frac{\|C(s)\|_{\mathcal{L}_{1}}}{1-\|G(s) \|_{\mathcal{L}_{1}}L_1} \sqrt{ \frac{W_{\max}}{\lambda_{\min}(P)\Gamma_1} + \frac{\zeta_{\max}}{\lambda_{\min}(P)\Gamma_2}} \dots \nonumber \\
       & + \frac{\|G(s)\|_{\mathcal{L}_{1}}}{1-\|G(s) \|_{\mathcal{L}_{1}}L_1}\|(\Delta(x_{ref})-\Delta(x))_{\tau}\|_{\mathcal{L}_{\infty}} 
    \end{align}
  Note that by invoking the idea of \textit{projection} operator we can claim boundedness on the term $\|(\Delta(x_{ref})-\Delta(x))_{\tau}\|_{\mathcal{L}_{\infty}}$. Further, using Lemma~\ref{lem:conv}, with Assumption~\ref{asum:1}, we can say that in the vicinity of $\Tilde{x}(t)\rightarrow 0$, $\Delta(x)\rightarrow \Delta(x_{ref})$.
    
    
    Next, we will derive the second bound in \eqref{eq:v2}, for which we have 
    \begin{align}\label{eq:law_er}
       & u_{ref}(s) - u(s) = -(C(s) \bar{\zeta}^T+ k_m^T) (x_{ref}(s)-x(s)) \dots \nonumber \\
       &+C(s)\Tilde{\Bar{\eta}}(s)-C(s)(\Delta(x_{ref}(s))-\Delta(x(s)))
    \end{align}
    which, again by using Lemma~\ref{lem:conv} with Assumption~\ref{asum:1}, simplifies to 
    \begin{align}
       & u_{ref}(s) - u(s) = -(C(s) \bar{\zeta}^T+ k_m^T) (x_{ref}(s)-x(s)) \dots \nonumber \\
       &+C(s)(H(s)^{-1}\Tilde{x}(s)- \Tilde{\eta}(s)- \epsilon_1(x(s)))
    \end{align}
    and consequently, we have:
    \begin{align}\label{eq:gamma_2}
       & \|(u_{ref} - u)_{\tau}\|_{\mathcal{L}_{\infty}} \leq \gamma_2 = \dots \nonumber \\
       &\|(C(s) \bar{\zeta}^T+ k_m^T)\|_{\mathcal{L}_{1}} \|(x_{ref}-x)_{\tau}\|_{\mathcal{L}_{\infty}} \dots \nonumber \\
       &+\|C(s)H(s)^{-1}\|_{\mathcal{L}_{1}}\|\Tilde{x}_{\tau}\|_{\mathcal{L}_{\infty}}\dots \nonumber \\
       &- \|C(s)\|_{\mathcal{L}_{1}}\|\Tilde{\eta}_{\tau}\|_{\mathcal{L}_{\infty}}- \|C(s)\|_{\mathcal{L}_{1}}\|\epsilon_1(x_{\tau})\|_{\mathcal{L}_{\infty}}
    \end{align}
    Hence, the above relation can be claimed to be bounded. 

    \end{proof}

     \subsection{Stability Analysis for Controller with DNN}
    The generalization error of a machine learning model is defined as the difference between the empirical loss of the training set and the expected loss of test set. This property determines the ability of the trained model to generalize well from the learning data to new unseen data. The capability of the model to extrapolate from training data to new data is determined. The \textit{generalization} error is typically defined as
    \begin{align}
        \hat{\Delta}(x) - f_{\theta}(x) \leq \epsilon_G
    \end{align}
    where $f_{\theta}$ is the learned approximation to the model uncertainty with parameters $\theta \in \Theta$, where $\Theta$ is the space of parameters, i.e. $f_{\theta}: \mathbb{R}^n \rightarrow \mathbb{R}^m$.
    Using the deep learning framework with controller \eqref{eq:totl_contrl_law} and using the system in \eqref{eq:er_gnrl_st_sp}, we can re-write the system error dynamics as
    \begin{align}\label{eq:er_dyn2}
    \dot{\Tilde{x}} = A_m \Tilde{x} + B_1( f_{\theta}(x) - \Delta(x) + \Tilde{\Bar{\Delta}}(x))
    \end{align}
    Adding and subtracting the term $\Delta'(x)$ in above expression and using the training and generalization error definitions we can write the above as
    \begin{align}\label{eq:er_dyn3}
    \dot{\Tilde{x}} = A_m \Tilde{x} + B_1( f_{\theta}(x)  -\Delta'(x)  + \Delta'(x) -\Delta(x) + \Tilde{\Bar{\Delta}}(x))
    \end{align}
    The term $(\Delta'(x)-\Delta(x))$ is the DNN training error and for simplicity we assume the training error to be zero as also done in \cite{joshi2019deep}. The term $(f_{\theta}(x)  -\Delta'(x))$ is the generalization error of our machine learning model. Therefore, the final predictor error dynamics looks like  \begin{align}\label{eq:er_dyn4}
    \dot{\Tilde{x}} = A_m \Tilde{x} + B_1( \epsilon_G + \Tilde{\Bar{\Delta}}(x))
    \end{align}
    Now, the asymptotic tracking performance of the error dynamics under the deep learning framework can be determined by first defining a Lyapunov candidate function of the type 
    \begin{align}
        V(\Tilde{x},\Tilde{\Bar{\Delta}})=\Tilde{x}^T P \Tilde{x} + \frac{\Tilde{\Bar{\Delta}}^T \Tilde{\Bar{\Delta}}}{\Gamma_2}
    \end{align}
    Taking the time derivative of the above Lyapunov function we get
    \begin{align}
        \dot{V}(\Tilde{x},\Tilde{\Bar{\Delta}})=-\Tilde{x}^T Q \Tilde{x} + 2 \epsilon_G B_1 P \Tilde{x} 
    \end{align}
    where $Q$ is the solution of Lyapunov equation $A_m^T P + P A_m= -Q$. To satisfy the condition $\dot{V}(\Tilde{x},\Tilde{\Bar{\Delta}})\leq 0$ we get the following upper bounds on the generalization error
    \begin{align}\label{eq:DNN_bound}
        \|\epsilon_G\|< \frac{\lambda_{\max}(Q)\|\Tilde{x}\|}{\lambda_{\min}(P)}
    \end{align}
    The main idea here is that the DNN framework produces a generalization error lower than the bound in \eqref{eq:DNN_bound} which then guarantees Lyapunov stability of the system under \neural. 
Further, using the above Lyapunov analysis, it is important to show information theoretically the lower bound on the number of independent samples that is needed to train, before the DNN generalization error can be claimed to be below a determined lower level as derived in \eqref{eq:DNN_bound}. Since our Lyapunov analysis gives an upper bound in \eqref{eq:DNN_bound} that is similar to the upper bound in \cite{joshi2019deep}, we refer the reader to Theorem 2 in \cite{joshi2019deep} where the authors study the sample complexity results from computational theory and show that when applied to a network learning real-valued functions the number of training samples grows at least linearly with the number of tunable parameters to achieve the generalization error specified in \eqref{eq:DNN_bound}.

\section{Simulation Experiments}\label{SIM-Results}

In this section we discuss the simulation setup and provide simulation results using PyBullet \cite{coumans2021} for F1TENTH. We evaluate our proposed \neural~against the \LF~\cite{rajamani2011vehicle}, \ladap~\cite{hovakimyan20111} and the \deepM~ \cite{joshi2019deep} controllers. We do not provide any evaluation of the adopted DNN framework from \cite{joshi2019deep}. Instead, the
reader is referred to \cite{joshi2019deep} for complete evaluation of the DNN training, testing and
validation performances.

\subsection{Simulation Setup}\label{subsec-sim_setup}
Open source PyBullet physics simulator is used to simulate the F1TENTH vehicle based on the URDF model design presented by Babu and Behl \cite{babu2020f1tenth}. The robot model was slightly modified to account for the parameters identified in our previous work \cite{gonultas2023system} and PyBullet is used as a direct replacement of the authors' physics engine of choice.

\subsection{Simulation Results}\label{subsec-sim_rslts}





\begin{figure*}[!h]
     \centering
     \begin{subfigure}[b]{0.24\textwidth}
         \centering
         \includegraphics[width=\textwidth]{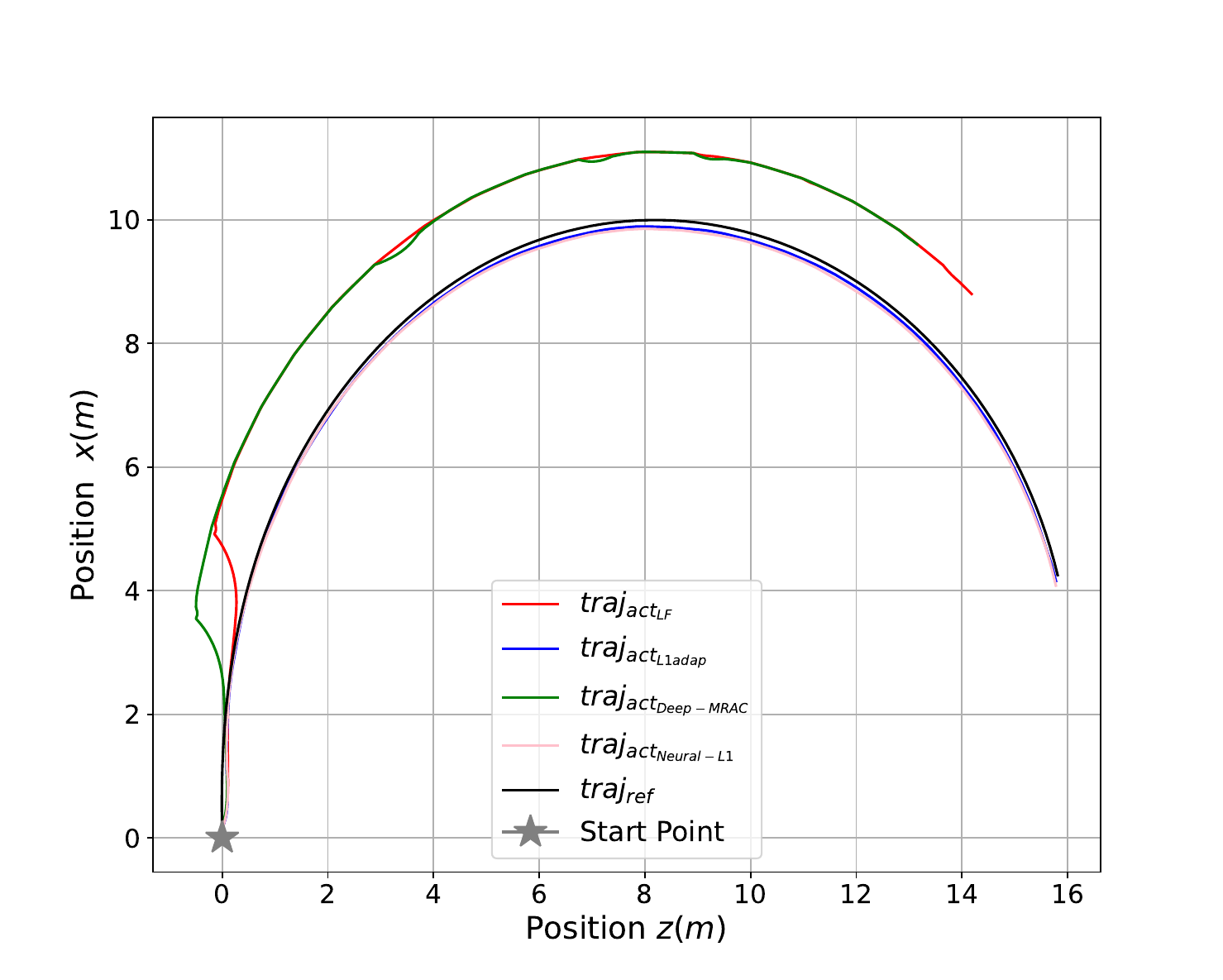}
         \caption{Trajectory tracking  performance of \ladap~and \neural~controllers are superior to \LF~and~\deepM~controllers.}
         \label{fig:circ_traj}
     \end{subfigure}
     \hfill
     \begin{subfigure}[b]{0.24\textwidth}
         \centering
         \includegraphics[width=\textwidth]{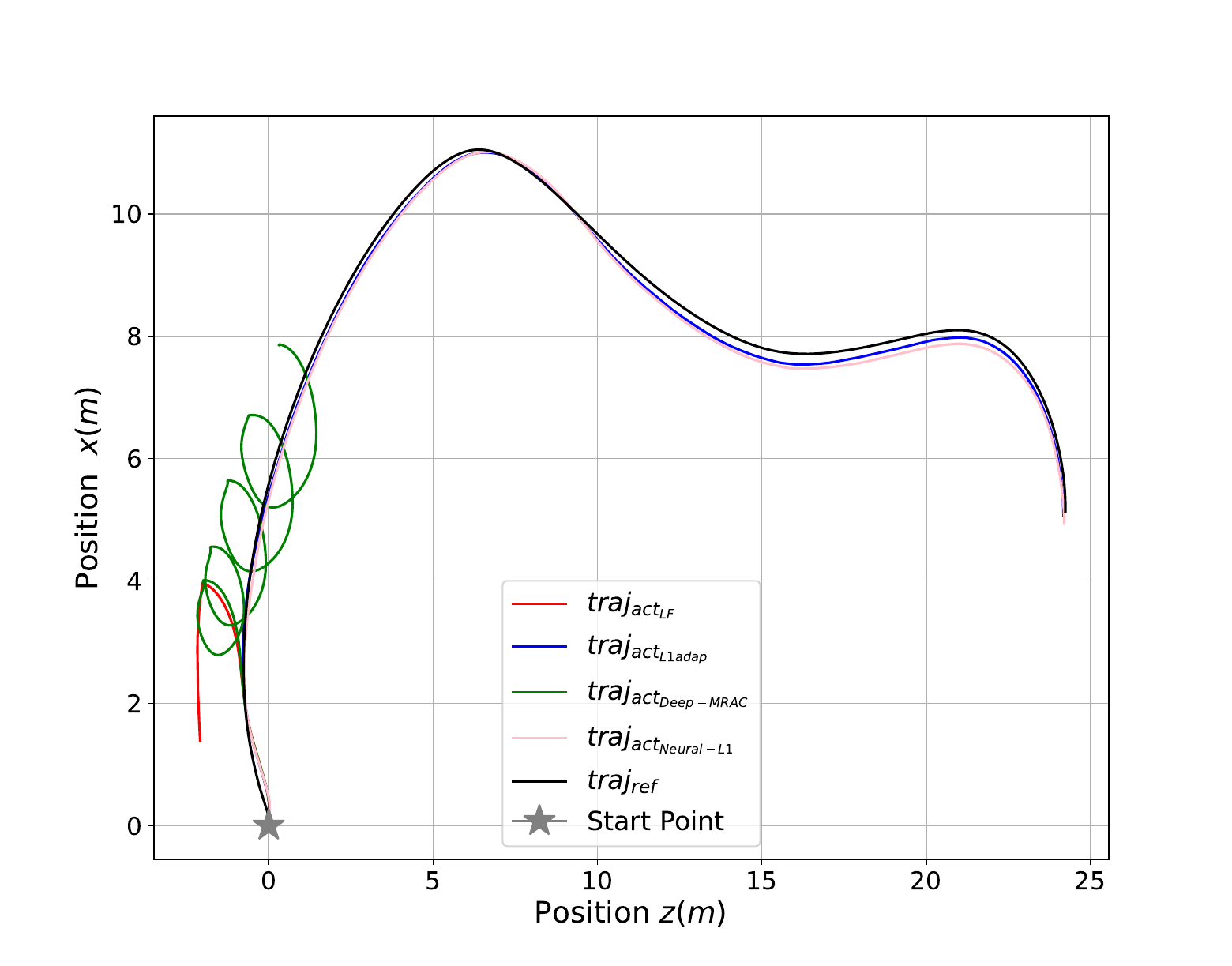}
         \caption{Over this arbitrary (varying radius $R$) trajectory, \LF~and \deepM~controllers fail to track the \textit{ref} trajectory. }
         \label{fig:berlin_traj}
     \end{subfigure}
     \hfill
     \begin{subfigure}[b]{0.24\textwidth}
         \centering
         \includegraphics[width=\textwidth]{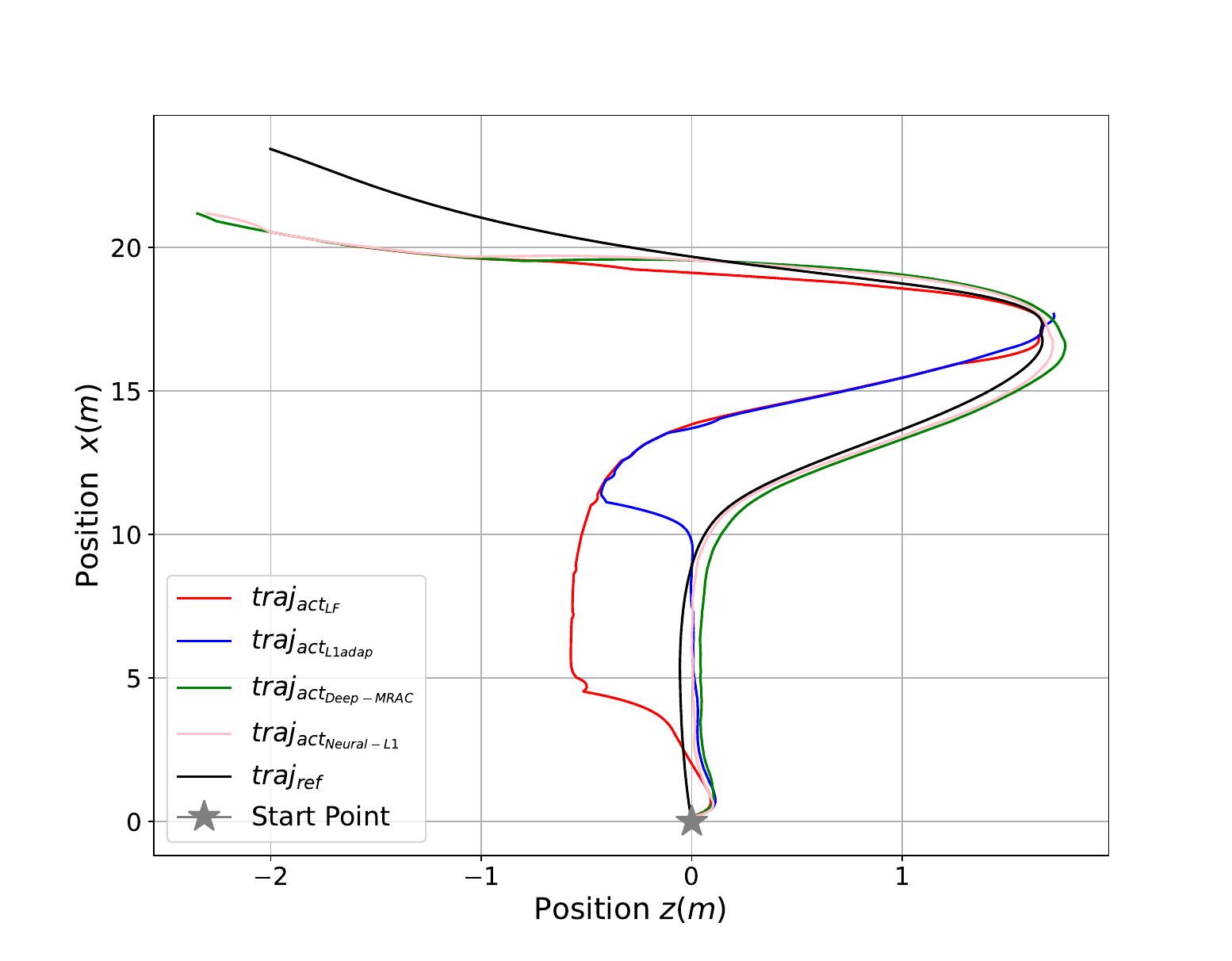}
         \caption{Over this arbitrary (varying radius $R$) trajectory, \LF~and \ladap~controllers fail to track the \textit{ref} trajectory.}
         \label{fig:gbr_traj}
     \end{subfigure}
     \hfill
     \begin{subfigure}[b]{0.24\textwidth}
         \centering
         \includegraphics[width=\textwidth]{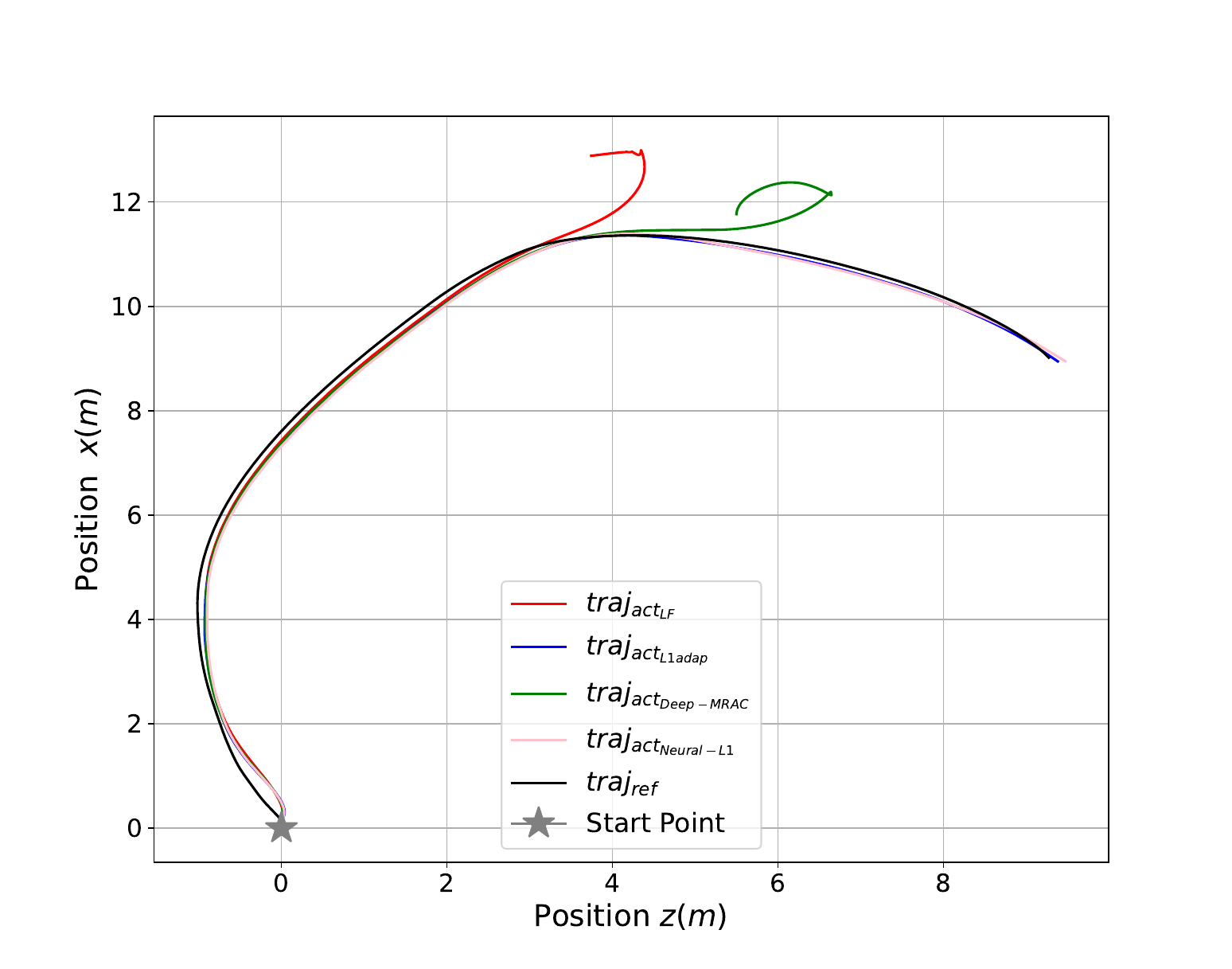}
         \caption{Over this arbitrary (varying radius $R$) trajectory, \LF~and \deepM~controllers fail to track the \textit{ref} trajectory.}
         \label{fig:montreal_traj}
     \end{subfigure}
        \caption{PyBullet simulation of lane-keeping dynamics comparing trajectory tracking performance on \textit{arbitrary} trajectories for our proposed controller \neural(pink) against existing controllers: \LF(red), \ladap(blue) and \deepM(green).}
        \label{fig:sim_plots}
\end{figure*}

Trajectory tracking performance of all controllers, over arbitrary trajectories, is shown in Fig.~\ref{fig:sim_plots}. Over a \textit{circular} reference (\textit{ref}) trajectory in Fig.~\ref{fig:circ_traj} with a constant radius $R=10m$ and vehicle longitudinal velocity $V_x = 10m/s$, \neural~and \ladap~exhibit a superior tracking performance in comparison to \LF~\cite{hovakimyan20111} and \deepM~\cite{joshi2019deep} controllers. Uncertainty, as plotted in \emph{black} in Fig.~\ref{fig:uncertainty_tracking}, is introduced directly into the control law $u(t)$ as a function of constant unknown parameters $[0.5314, 0.16918, -0.6245, 0.1095]$ with added uniform white noise $[-0.1,0.1]$. Note that the constant unknown parameters and the uniform white noise add up to the total \emph{true} uncertainty $\Delta(x)+\Bar{\Delta}(x)$.
In figures \ref{fig:berlin_traj}, \ref{fig:gbr_traj} and \ref{fig:montreal_traj} , we further provide the tracking performance results of all controllers over three other \emph{arbitrary} trajectories with varying radius $R$, even though the lateral error dynamics \cite{rajamani2011vehicle} is based on the assumption that the trajectory radius remains constant.
\neural~exhibits superior tracking performance in comparison to the other controllers as it successfully completes each of the four trajectories in figures \ref{fig:circ_traj}, \ref{fig:berlin_traj}, \ref{fig:gbr_traj} and \ref{fig:montreal_traj} , whereas \LF, \ladap~and \deepM~controllers fail to track at least one of the four \textit{ref} trajectories.

Next, in figures \ref{fig:state_tracking} and \ref{fig:uncertainty_tracking}, we study the profiles of the \emph{states} $[e_1, \dot{e}_1, e_2, \dot{e}_2]$ of the lateral error dynamics from \cite{rajamani2011vehicle} and the \emph{learned} uncertainty of the adaptive controllers, only corresponding to the circular trajectory shown in Fig. \ref{fig:circ_traj}.
Thus, in Fig. \ref{fig:state_tracking}, the states \emph{lateral error} $e_1$ and \emph{yaw angle error} $e_2$ deviate from the reference for \LF~controller and the \deepM~controller, implying unstable behavior of the controllers. Only for the controllers \neural~and \ladap~all four states converge \emph{close} to zero. This implies that both \neural~and \ladap~exhibit stability in the presence of disturbance directly induced into the control signal $u(t)$ which is also the steering angle command $\delta(t)$ in radians.
\begin{figure}
\centering
\subfloat{\includegraphics[width = 0.52\textwidth]{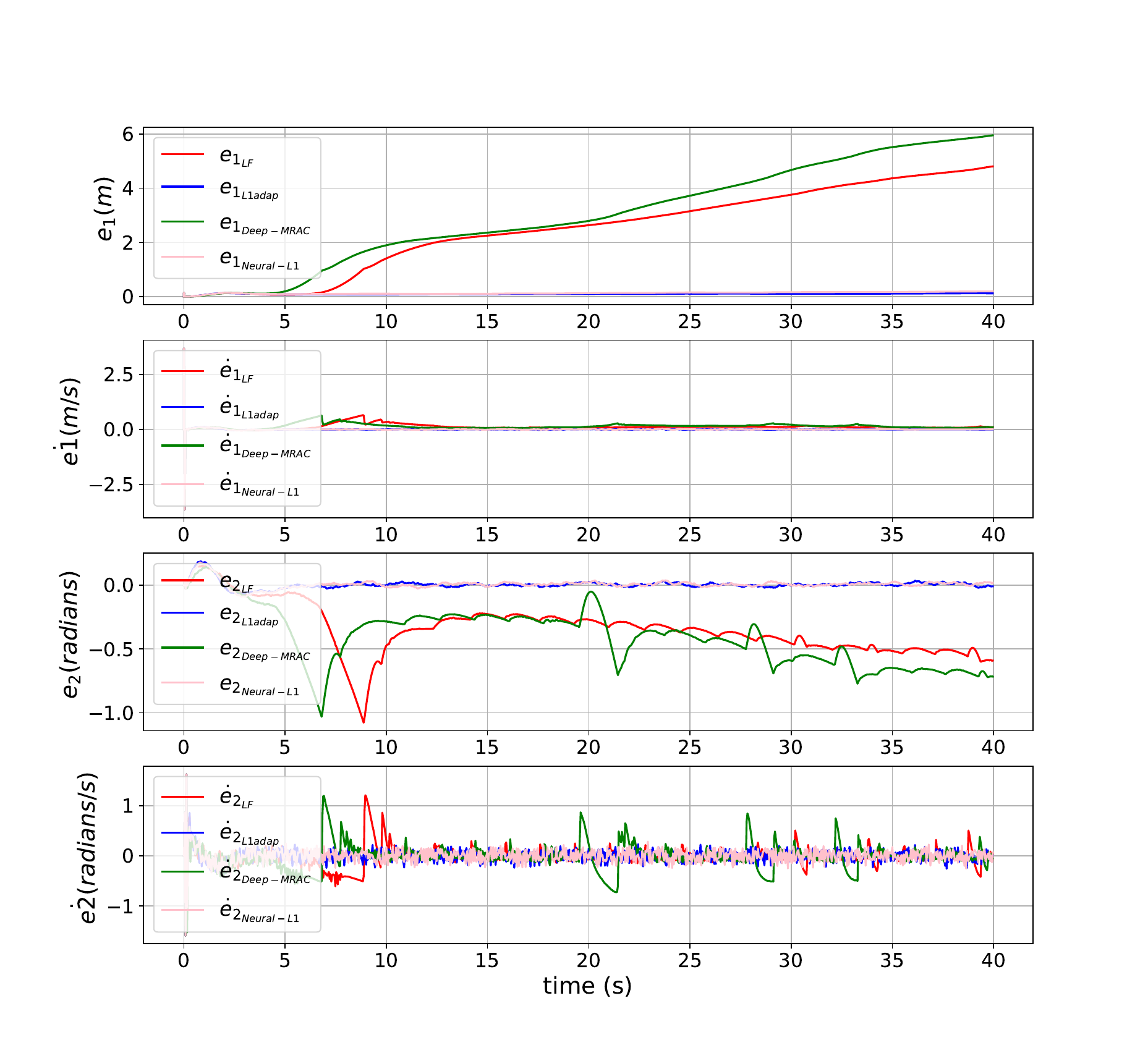}}
\caption{Profiles of the states of the system $[e_1, \dot{e}_1, e_2, \dot{e}_2]$ in PyBullet simulation. }
\vspace{-10pt}
\label{fig:state_tracking}
\end{figure}
It is clear from the \emph{green} plot in Fig. \ref{fig:uncertainty_tracking} that \deepM~controller doesn't learn the \emph{true} uncertainty represented in black, whereas controllers \ladap, \neural~are able to successfully learn the true uncertainty. This is because \ladap~and \neural controllers are robust in nature due to the selection of an appropriate strictly proper low pass filter $C(s)$, such that the $\mathcal{L}_1$-norm condition given in \eqref{eq:l1norm} is satisfied, enabling the controllers to attenuate the noise in the induced uncertainty. 



\begin{figure}
\centering
\subfloat{\includegraphics[width = 0.52\textwidth]{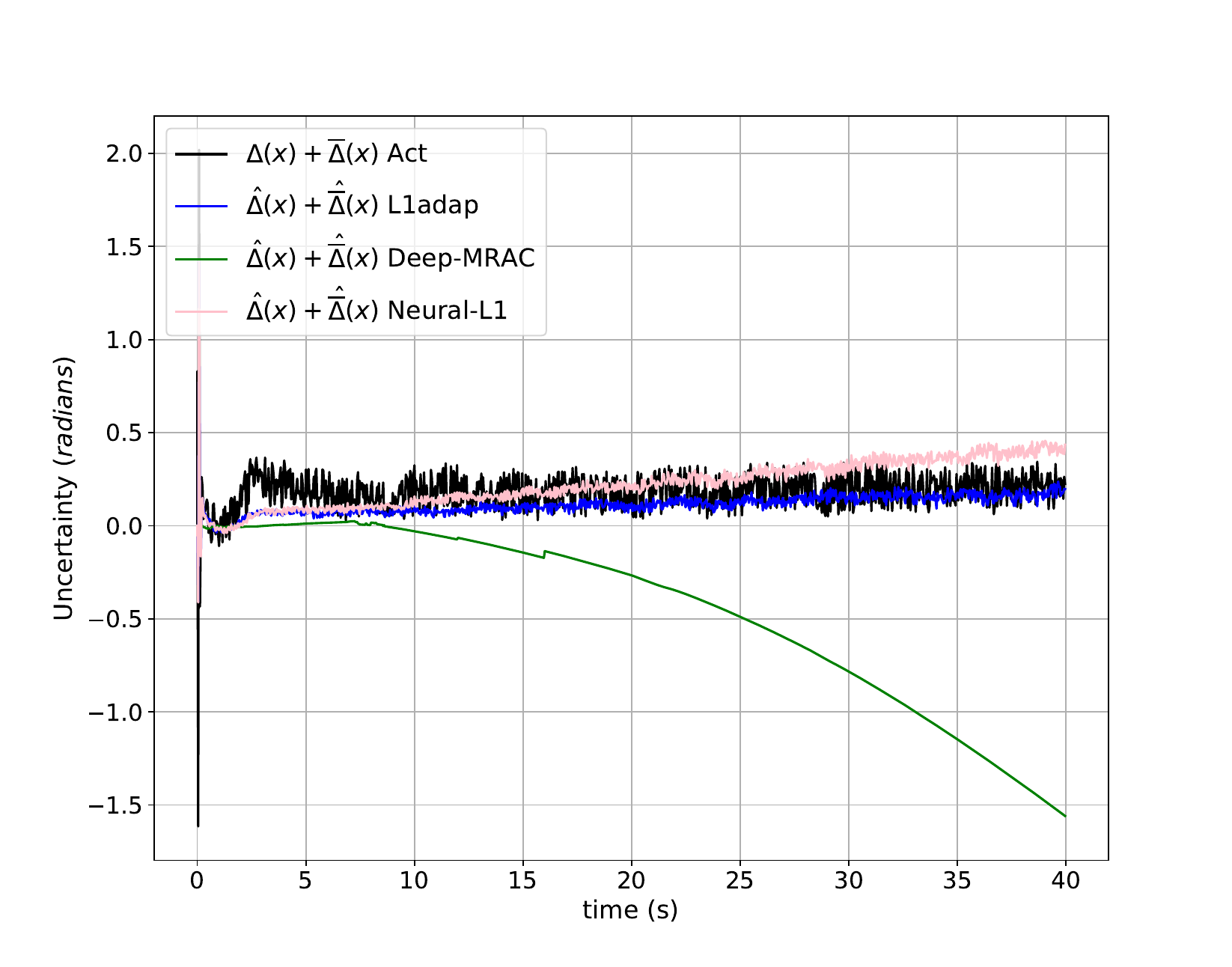}}
\caption{Comparison of uncertainty \emph{learned} by adaptive controllers for F1TENTH in PyBullet simulation. }
\vspace{-10pt}
\label{fig:uncertainty_tracking}
\end{figure}
\section{Real Experiments}\label{EXP-Results}
In this section we discuss the experimental setup and provide experimental results by evaluating our controller on the F1TENTH platform. We evaluate our proposed \neural~against the \LF, \ladap~and the \deepM~ \cite{joshi2019deep} controllers.
\subsection{Experimental Setup}
The control algorithms are deployed on an F1TENTH autonomous vehicle \cite{o2020f1tenth} running Jetson Xavier NX for onboard computation. For 6-DoF real-world state estimation, PhaseSpace X2E LED motion capture markers were attached to the autonomous vehicle which provide pose information at $50\ Hz$. ROS2 was used as the middleware stack for interfacing
between sensors, autonomous agent and control algorithms \cite{macenski2022robot}. For the F1TENTH, we test all the controllers with longitudinal velocities $V_x=0.5\ m/s$ and $V_x=1.0\ m/s$. The F1TENTH is then made to track a circular trajectory with a constant radius $2.5\ m$ and with added obstacles like a \emph{ramp} at $30^{\circ}$, and two wooden \emph{planks}. The physical obstacles induce disturbance externally, in addition to induced sensor noise and control signal disturbance. In this regard, similar to the PyBullet simulation setup described in Section~\ref{SIM-Results}, disturbance is directly injected into the control signal as constant unknown parameters $[0.5314, 0.16918, -0.6245, 0.1095]$ with added uniform white noise $[-0.1,0.1]$. In addition, we also add uniform white noise $[-0.09,0.09]$ to the PhaseSpace \emph{sensor} pose readings. Note that the constant unknown parameters and the uniform white noise added to the control signal and the uniform white noise added to PhaseSpace sensor pose readings, add up to the total \emph{true} uncertainty $\Delta(x)+\Bar{\Delta}(x)$. The complete experimental setup is shown in Fig.~\ref{fig:exp_setup_com}.
\begin{figure}
     \centering
     \begin{subfigure}[b]{0.5\textwidth}
         \centering
         \includegraphics[width=\textwidth]{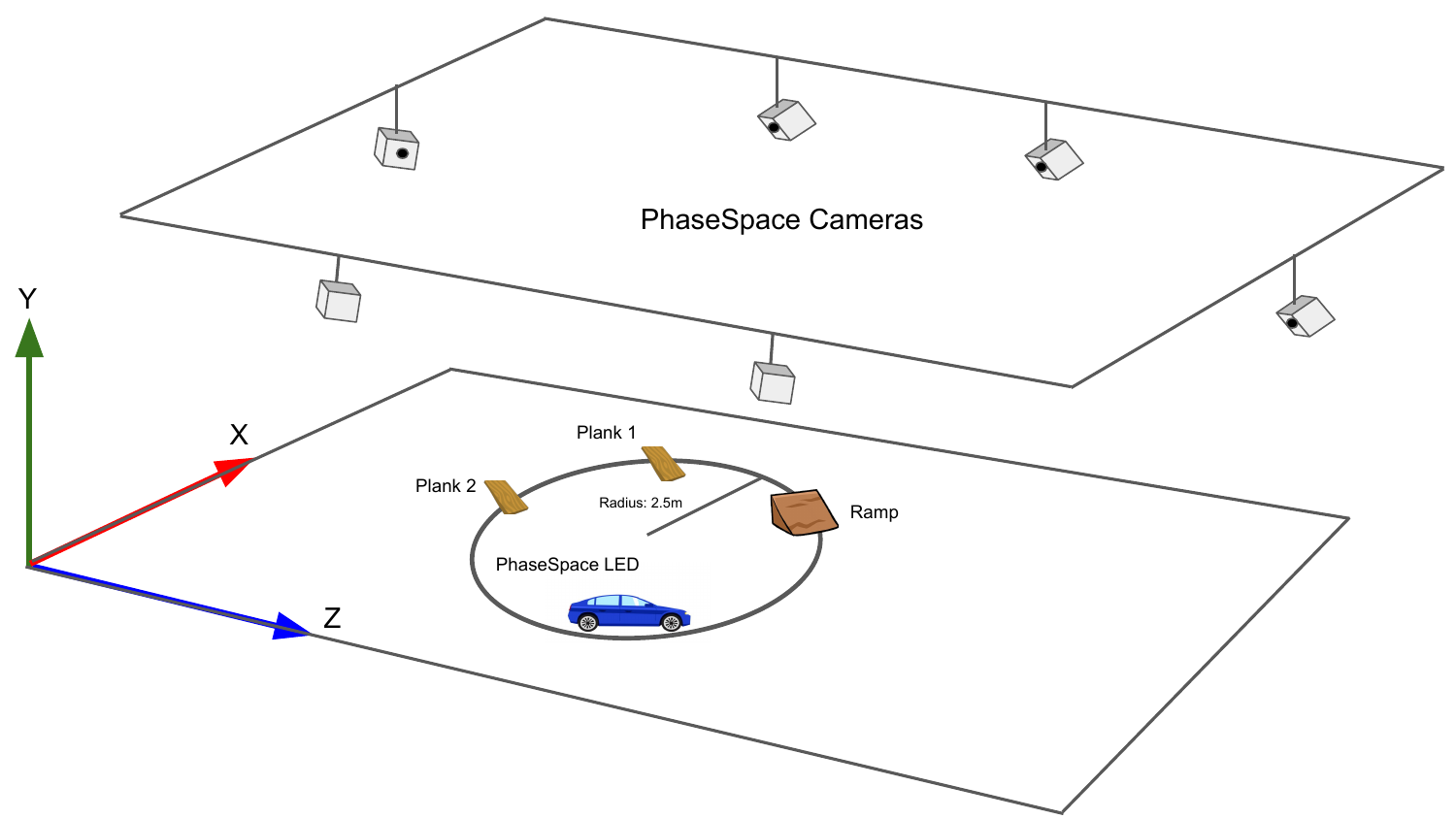}
         \caption{The MnRI Drone lab is equipped with PhaseSpace localization system that allows localization of the F1TENTH vehicle in $x,y,z$ coordinate frames.}
         \label{fig:anim_drone_lab}
     \end{subfigure}
     \hfill
     \begin{subfigure}[b]{0.5\textwidth}
         \centering
         \includegraphics[width=\textwidth]{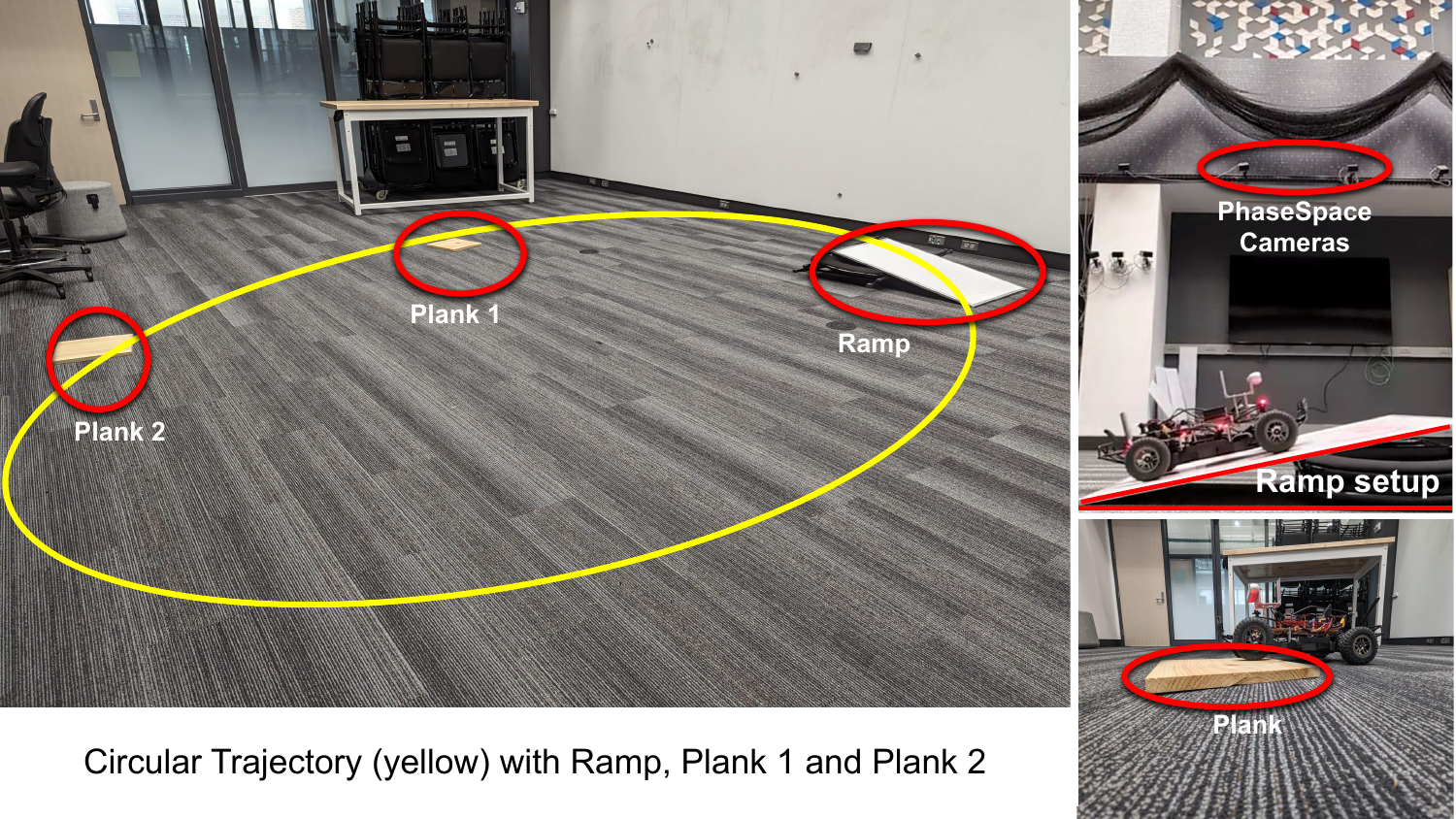}
         \caption{The circular reference trajectory (yellow) of radius $R=2.5\ m$ with physical placement of obstacles \emph{ramp} and \emph{planks}. }
         \label{fig:exp_drone_lab}
     \end{subfigure}
        \caption{Experimental setup at Minnesota Robotics Institute (MnRI) Drone lab.}
        \label{fig:exp_setup_com}
\end{figure}

\subsection{Experimental Results}

\begin{figure*}[!h]
     \centering
     \begin{subfigure}[b]{0.49\textwidth}
         \centering
         \includegraphics[width=\textwidth]{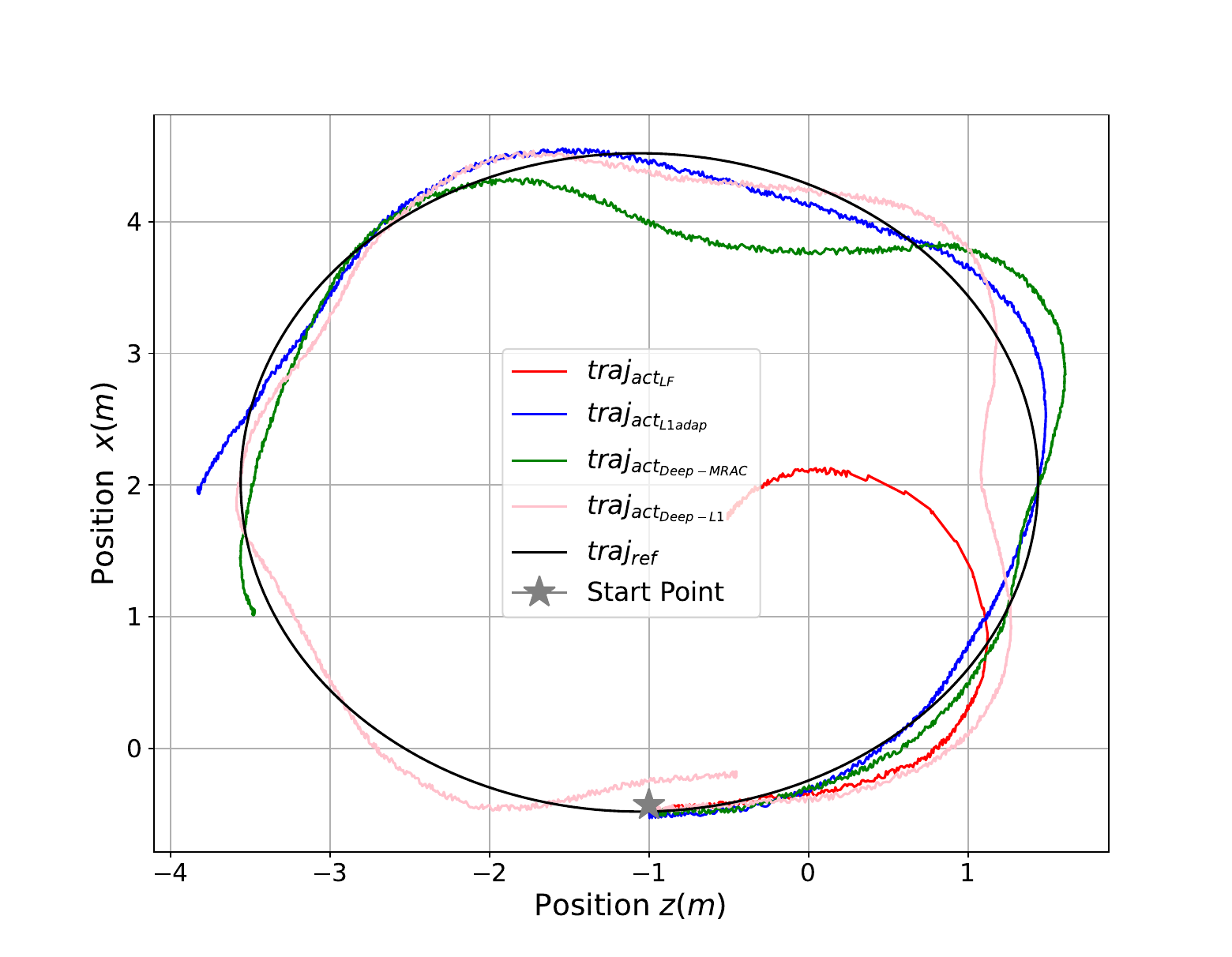}
         \caption{Trajectory tracking  performance of \neural~controller is superior to \LF, \ladap~and \deepM~controllers with $V_x=0.5m/s$.}
         \label{fig:exp_slow_traj}
     \end{subfigure}
     \hfill
     \begin{subfigure}[b]{0.49\textwidth}
         \centering
         \includegraphics[width=\textwidth]{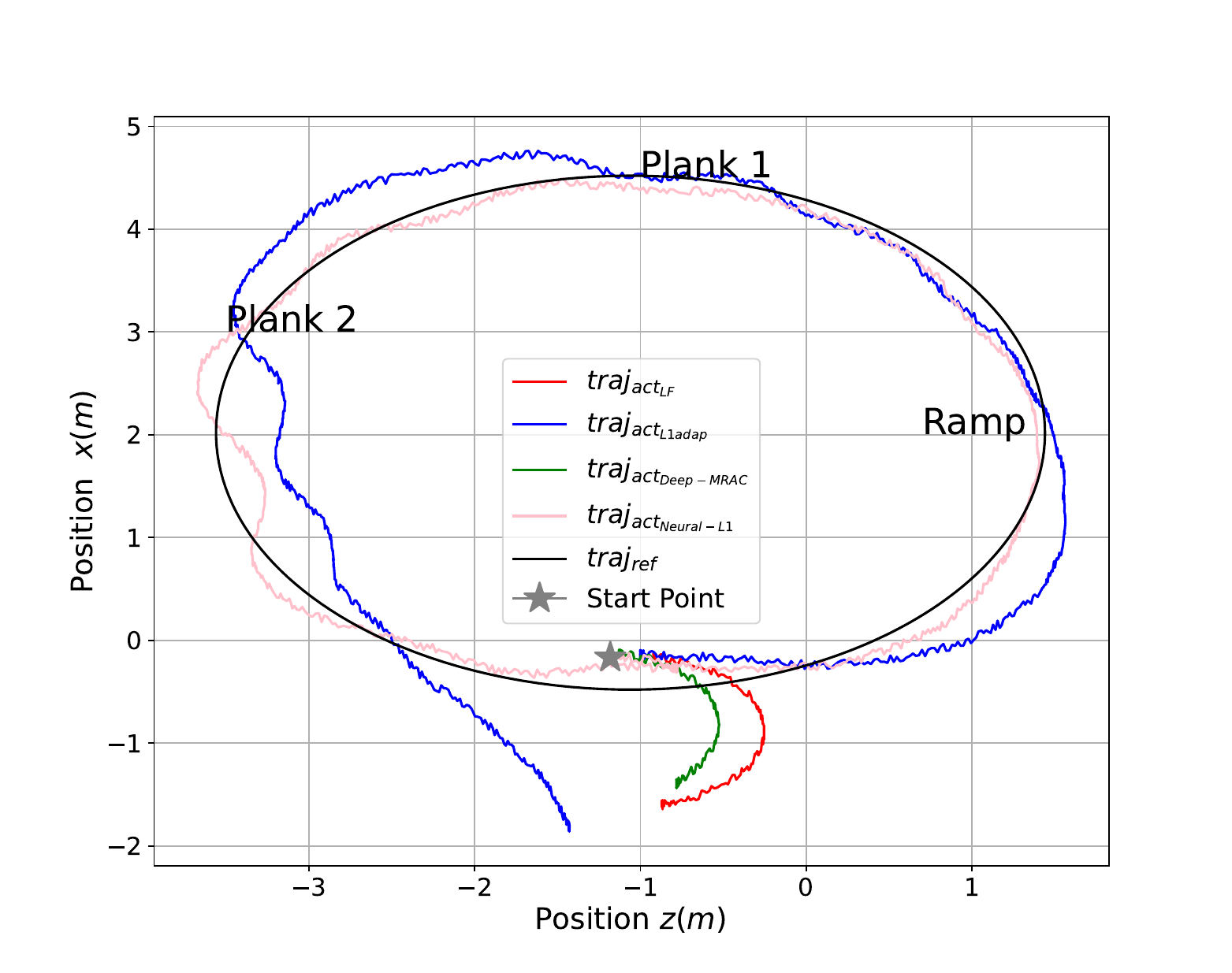}
         \caption{Trajectory tracking performance of \neural~controller is superior to \LF, \ladap~and \deepM~controllers with $V_x=1.0m/s$. }
         \label{fig:exp_fast_traj}
     \end{subfigure}
        \caption{Experiment with F1TENTH using lane-keeping dynamics on \textit{circular} trajectory for different longitudinal velocities. Unlike controller \LF(red), \ladap(blue) and \deepM(green), our proposed controller \neural(pink) is able to complete one loop of the circular trajectory with different longitudinal velocities.}
        \label{fig:real_plots}
\end{figure*}


In this subsection, we discuss in detail the results from the \emph{real-world} experiments conducted using the front-steered Ackermann vehicle F1TENTH.  Fig. \ref{fig:exp_slow_traj} shows the trajectory tracking performances of all the controllers at $V_x=0.5\ m/s$, with sensor noise and control signal disturbance, but without any physical obstacles.
 Fig.~\ref{fig:exp_fast_traj} shows the trajectory tracking performances of the controllers with \emph{ramp} and \emph{planks} added to the reference trajectory. The vehicle longitudinal velocity is increased to $V_x=1.0\ m/s$. The location of the physical obstacles, one \emph{ramp} and two \emph{planks}, are indicated in Fig.~\ref{fig:exp_fast_traj}. As shown in figures~\ref{fig:exp_slow_traj} and ~\ref{fig:exp_fast_traj}, for both experiments, \neural~tracks the trajectory exceptionally well and completes one loop of the circular trajectory, whereas \LF, \ladap~and \deepM~controllers are not able to even complete one loop of the trajectory. 
Next, in figures \ref{fig:exp_fast_state_tracking} and \ref{fig:exp_fast_uncertainty_tracking}, we study the profiles of the \emph{states}, $[e_1, \dot{e}_1, e_2, \dot{e}_2]$, of the lateral error dynamics from \cite{rajamani2011vehicle} and the \emph{learned} uncertainty of the adaptive controllers corresponding to the circular trajectory shown in Fig.~\ref{fig:exp_fast_traj}.
In Fig.~\ref{fig:exp_fast_state_tracking}, the states \emph{lateral error} $e_1$ and \emph{yaw angle error} $e_2$ deviate from the reference for \LF~controller and the \deepM~controller, implying unstable behavior of the controllers. Only for controllers \neural~and \ladap, all four states remain in proximity to zero. Further, when the tracking error profiles of \neural~and \ladap~ are compared in Fig~\ref{fig:exp_fast_state_tracking}, the errors $e_1$ and $e_2$ converge better for \neural. This observation is not surprising because \neural~controller is inherently based on the theory of \ladap. However, ultimately \ladap~controller becomes unstable leading the F1TENTH vehicle to deviate from the \textit{ref} trajectory. The deviation of the F1TENTH vehicle, due to the unstable behavior of \ladap~controller, from the \textit{ref} trajectory is clearly indicated by the increase in the magnitude of the state profiles $e_1$ and $e_2$ plotted in blue in Fig.~\ref{fig:exp_fast_state_tracking} after $17.5\ s$.

\begin{figure}
\centering
\subfloat{\includegraphics[width = 0.52\textwidth]{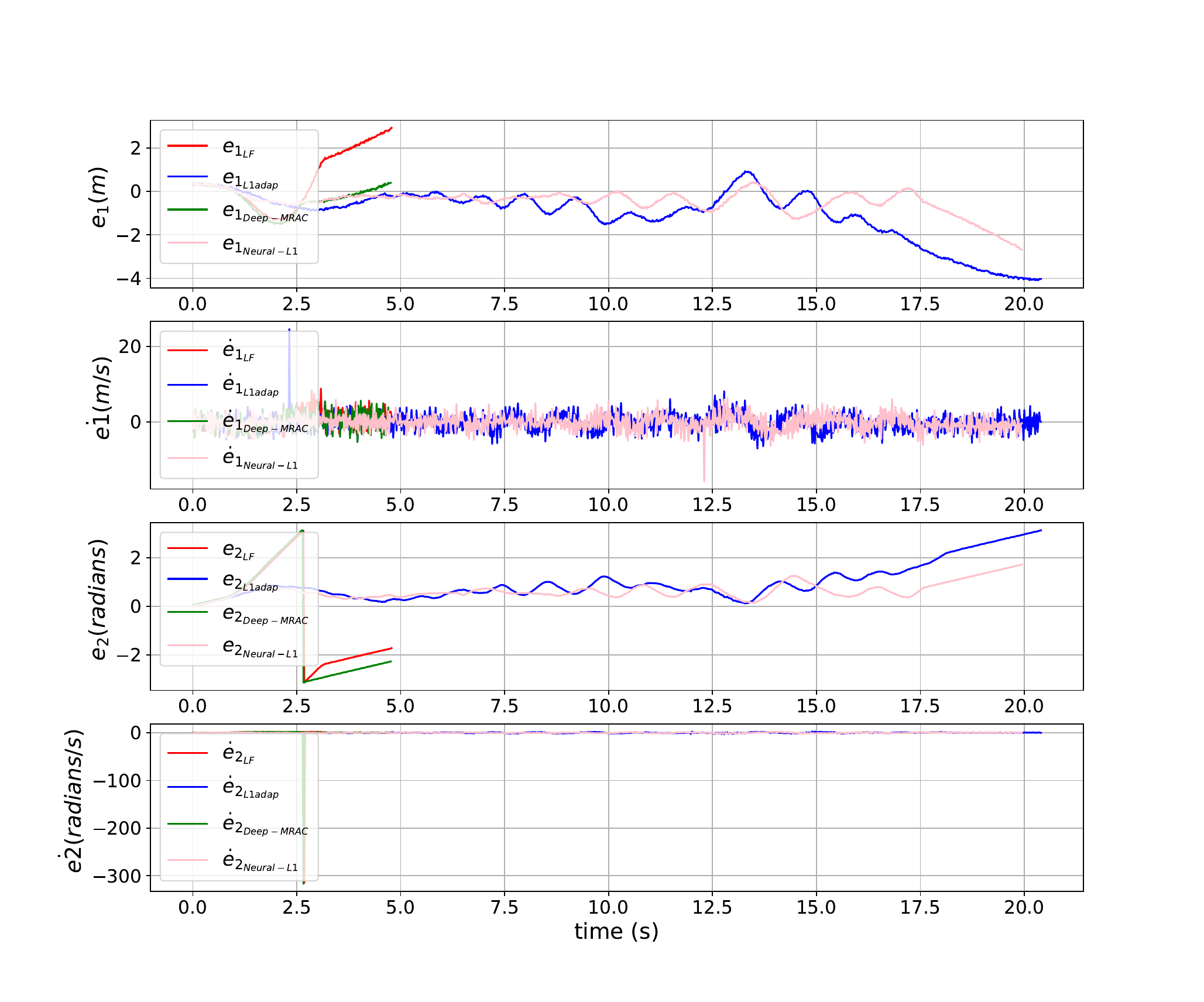}}
\caption{Comparison of of the states of the system $[e_1, \dot{e}_1, e_2, \dot{e}_2]$ for F1TENTH tracking a circular trajectory of radius $2.5m$ at a longitudinal speed of $1.0m/s$.}
\vspace{-10pt}
\label{fig:exp_fast_state_tracking}
\end{figure}
These results can be better understood by studying the \emph{learned} uncertainty profiles in Fig.~\ref{fig:exp_fast_uncertainty_tracking}. From Fig.~\ref{fig:exp_fast_uncertainty_tracking}, we notice that the uncertainty learned by \neural~converges close to the true uncertainty $\Delta(x)+\Bar{\Delta}(x)ACT$, and the uncertainty learned by \ladap~deviates at the end of the experiment. This is due to the external obstacles like \emph{ramp} and \emph{planks} added to the trajectory. These results imply that even though both \ladap~and \neural~controllers are \emph{robust} in nature, the neural network is able to learn the uncertainties induced into the system by the presence of physical obstacles such as the \emph{ramp} and the two \emph{planks}, giving the \neural~controller a better capability than the \ladap~to learn the overall uncertainty. The oscillations induced in the trajectory, \emph{blue} plot in Fig.~\ref{fig:exp_fast_traj}, of the \ladap~controller, right after the F1TENTH travels past \emph{plank} 2, corroborate the observation in Fig.~\ref{fig:exp_fast_uncertainty_tracking} that the learned uncertainty does not match the true uncertainty for \ladap. These oscillations lead the F1TENTH vehicle to not even complete one loop of the circular \textit{ref} trajectory.  
\begin{figure}
\centering
\subfloat{\includegraphics[width = 0.52\textwidth]{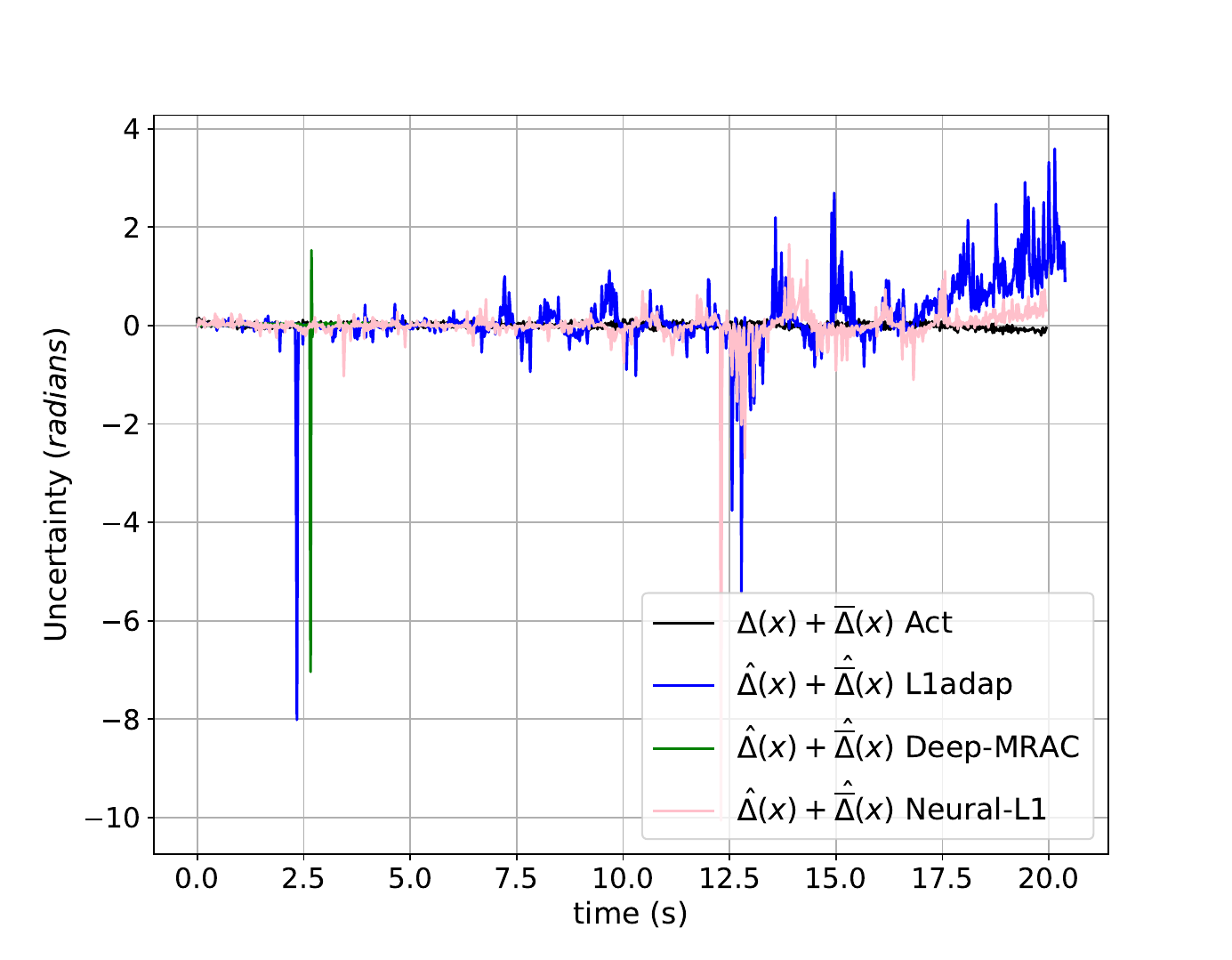}}
\caption{Uncertainty \emph{learned} by adaptive controllers for F1TENTH tracking a circular trajectory of radius $2.5m$ at a longitudinal speed of $1.0m/s$. }
\vspace{-10pt}
\label{fig:exp_fast_uncertainty_tracking}
\end{figure}

The increase in oscillations in the trajectory is further corroborated in $e_1$ and $e_2$ plots in Fig.~\ref{fig:exp_fast_state_tracking}, after about $12.5\ s$. Clearly, \neural~controller handles these oscillations much better in comparison to \ladap~controller, by keeping the \emph{errors} close to zero. These results conclude that the \neural~controller is stable as well as robust to disturbance and noises induced externally to the system, even when the disturbances are in the form of physical objects.
For more extensive experimental results, we refer the reader to the video submission, where the superior tracking performance of the \neural~ controller is further exhibited by testing the \neural~controllers over longer duration runs, with F1TENTH running over multiple loops of the circular \textit{ref} trajectory. Our project page, including supplementary material and videos, can be found at \url{ https://mukhe027.github.io/Neural-Adaptive-Control/}


\section{Conclusions}\label{end}

In this paper, we addressed the problem of stable and robust control of front-steered Ackermann vehicles with lateral error dynamics for the application of \emph{lane keeping}. We demonstrated that, in the presence of uncertainties induced into the control system, most existing controllers such as linear \emph{state-feedback} controller (i.e. PID), $\mathcal{L}_1$ Adaptive controller \cite{hovakimyan20111} and the \deepM~\cite{joshi2019deep} fail to track a reference trajectory. This is particularly due to the controllers' inabilities to attenuate noise.  Out of the three existing controllers, linear \emph{state-feedback} controller, $\mathcal{L}_1$ Adaptive controller \cite{hovakimyan20111} and the \deepM~\cite{joshi2019deep}, $\mathcal{L}_1$ Adaptive controller \cite{hovakimyan20111} exhibits the best tracking performance because unlike other controllers, $\mathcal{L}_1$ Adaptive controller\cite{hovakimyan20111} is designed traditionally to be stable, adaptive as well as \emph{robust}. However, under the circumstance of uncertainties that are induced as sensor noise, control signal disturbance and physical obstacles on the reference trajectory, even $\mathcal{L}_1$ Adaptive controller \cite{hovakimyan20111} fails. Therefore, taking inspiration from the stability, adaptiveness and the robustness properties of the $\mathcal{L}_1$ Adaptive controller \cite{hovakimyan20111} and combining it with the novel neural network approach of learning the adaptive laws in \deepM~\cite{joshi2019deep}, we introduced the Neural $\mathcal{L}_1$ Adaptive controller. 

We demonstrated the effectiveness of the Neural $\mathcal{L}_1$ Adaptive controller by taking a multi-tiered approach: i) We extended the theory of guaranteed stability, adaptiveness and robustness of $\mathcal{L}_1$ Adaptive controller \cite{hovakimyan20111}, while combining it with the neural network approach of \deepM~\cite{joshi2019deep}, to the Neural $\mathcal{L}_1$ Adaptive controller; ii) We conducted extensive physics-based simulation on PyBullet using a model of F1TENTH vehicle and compared the trajectory tracking performance of our proposed controller with the performance of other existing controllers; iii) We conducted \emph{real} experiments using a F1TENTH platform and evaluated the performance of our controller. 
These results allowed us to conclude that the Neural $\mathcal{L}_1$ adaptive controller behaves in a stable and robust manner in the presence of uncertainties induced into the dynamics and has \emph{superior} tracking performance than a traditional linear state-feedback \cite{rajamani2011vehicle}, the \deepM~controller introduced in \cite{joshi2019deep}, and the conventional $\mathcal{L}_1$ adaptive controller \cite{hovakimyan20111}.
Future work could extend to use perception to learn features from the environment which in turn could be used to learn the true uncertainty more accurately and expand
the stability results when the robots have noise from multiple sensors.

\bibliographystyle{IEEEtran}
\bibliography{ref.bib}

\begin{IEEEbiography}[{\includegraphics[width=1in,height=1.25in,clip,keepaspectratio]{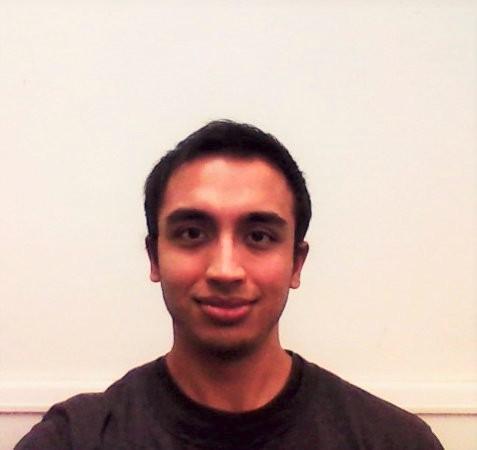}}]{Pratik Mukherjee} received his
B.S. and M.S. degree in mechanical engineering from University of Minnesota Twin-Cities, Minneapolis, USA,
in 2014 and 2016, respectively. He obtained his
Ph.D. degree in electrical engineering from Virginia
Tech, Blacksburg, VA, USA, as part of the Coordination at Scale Laboratory. He is currently working as Postdoctoral Associate at the University of Minnesota Twin-Cities, Minneapolis, USA in the computer science department in the Robotic Sensor Networks Lab.
His current research interests include distributed, stable and robust topology control applied to single or multi-robot systems. He is an incoming Assistant Professor at Florida Atlantic University, Boca Raton, FLorida in the Ocean and Mechanical Engineering Department. 
\end{IEEEbiography}

\begin{IEEEbiography}[{\includegraphics[width=1in,height=1.25in,clip,keepaspectratio]{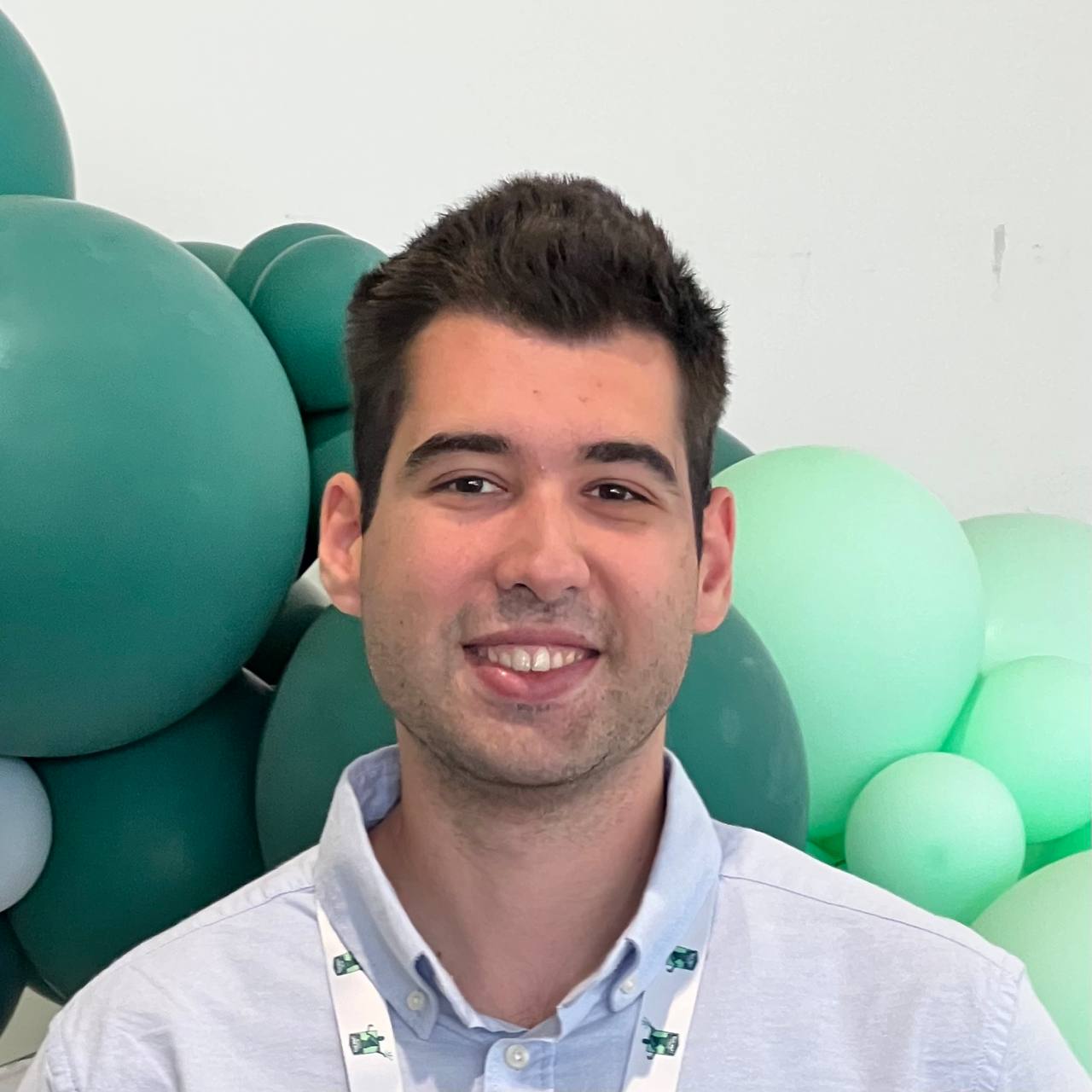}}]{Burak M. Gonultas} received his B.S. degrees in electronics and communication engineering and computer engineering from Istanbul Technical University, Istanbul, Turkey, in 2018 and 2019, respectively. As a Ph.D. candidate he is currently working as Research Assistant at the University of Minnesota Twin-Cities, Minneapolis, USA in the computer science department in the Robotic Sensor Networks Lab. His current research interests include planning, dynamics, control and multi-agent reinforcement learning for autonomous mobile systems.
\end{IEEEbiography}

\begin{IEEEbiography}[{\includegraphics[width=1in,height=1.25in,clip,keepaspectratio]{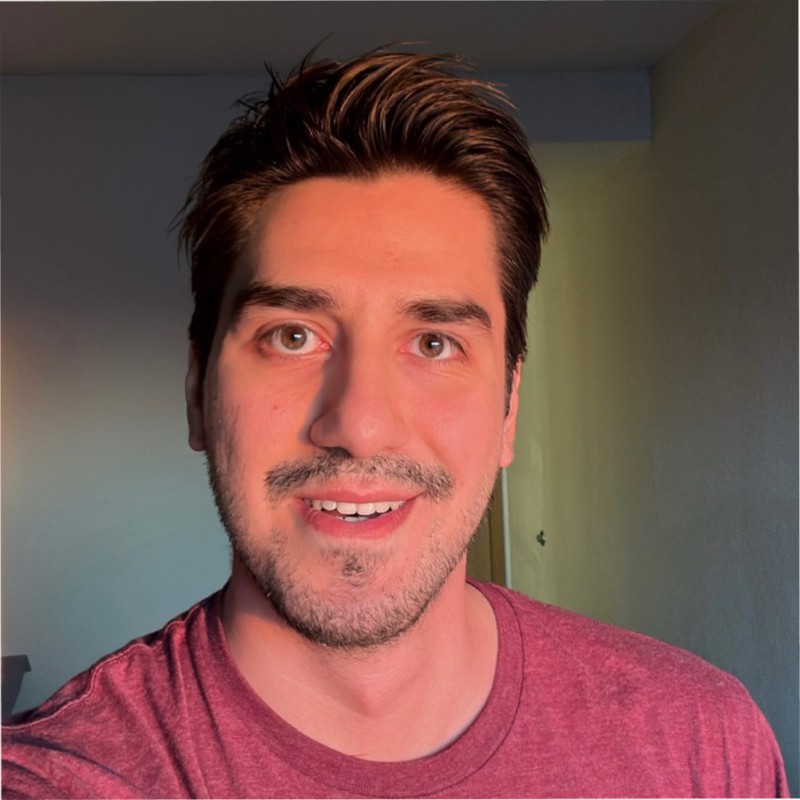}}]{O. Goktug Poyrazoglu}
received his
B.S. degree in mechanical engineering from Bilkent University, Ankara, Turkey,
in 2020. He is currently working as Ph.D. Student at the University of Minnesota Twin-Cities, Minneapolis, USA in the computer science department in the Robotic Sensor Networks Lab.
His current research interests include trajectory optimization, quantifying uncertainty, and modeling environmental interactions for mobile robotic applications.
\end{IEEEbiography}

\begin{IEEEbiography}[{\includegraphics[width=1in,height=1.25in,clip,keepaspectratio]{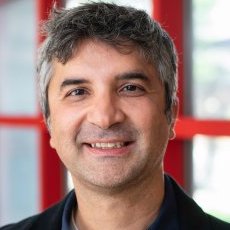}}]{Volkan Isler} received the B.S. degree in computer engineering from Bogazici University, Istanbul, Turkey,
in 1999, and the M.S.E. and Ph.D. degrees in computer and information science from University of
Pennsylvania, Philadelphia, PA, USA, in 2000 and
2004, respectively.
He is currently a Professor in the Department of Computer Science \& Engineering with University of
Minnesota, Minneapolis, MN, USA where he directs the Robotic Sensor Networks Lab. From 2020 to 2023, he was the head of Samsung's AI Center in NY. His research interests include geometric algorithms for perception and planning, and their applications to agricultural and consumer robots. From 2009 to 2015, he was the Chair of the Robotics and
Automation Society Technical Committee on networked robots. He received the
CAREER Award from the National Science Foundation, and McKnight Land-grant professorship from the University of Minnesota.
\end{IEEEbiography}

\end{document}